
\documentclass[journal]{IEEEtran}
%


%

%
\usepackage{cite}

%
\ifCLASSINFOpdf
  \usepackage[pdftex]{graphicx}
\else
\fi
%
%

%
\usepackage{amsmath}
\usepackage{amsthm}
\usepackage{amssymb}
\usepackage[]{algorithm2e}
\usepackage{subfig}
\usepackage{xcolor}
\usepackage{hyperref}

\begin{document}

\bstctlcite{BSTcontrol}
%
\title{Learning a Low-dimensional Representation of a Safe Region for Safe Reinforcement Learning on Dynamical Systems}
%
%
%

\author{Zhehua~Zhou, 
        Ozgur S.~Oguz, 
        Marion~Leibold, 
        and~Martin~Buss 
\thanks{This paper has been accepted by IEEE Transactions on Neural Networks and Learning Systems. DOI: 10.1109/TNNLS.2021.3106818}
\thanks{Z. Zhou, M. Leibold and M. Buss are with the Chair of Automatic Control Engineering, Technical
University of Munich, Munich 80290, Germany (e-mail: zhehua.zhou@tum.de; marion.leibold@tum.de; mb@tum.de).}
\thanks{O. Oguz is with the Max Planck Institute for Intelligent Systems and University of Stuttgart (e-mail: ozgur.oguz@ipvs.uni-stuttgart.de).}}

%
%

\markboth{IEEE Transactions on Neural Networks and Learning Systems}%
{}
%



\maketitle

\begin{abstract}
For the safe application of reinforcement learning algorithms to high-dimensional nonlinear dynamical systems, a simplified system model is used to formulate a safe reinforcement learning framework.
Based on the simplified system model, a low-dimensional representation of the safe region is identified and used to provide safety estimates for learning algorithms.
However, finding a satisfying simplified system model for complex dynamical systems usually requires a considerable amount of effort.
To overcome this limitation, we propose a general data-driven approach that is able to efficiently learn a low-dimensional representation of the safe region.
By employing an online adaptation method, the low-dimensional representation is updated using the feedback data to obtain more accurate safety estimates.
The performance of the proposed approach for identifying the low-dimensional representation of the safe region is illustrated using the example of a quadcopter.
The results demonstrate a more reliable and representative low-dimensional representation of the safe region compared to previous work, which extends the applicability of the safe reinforcement learning framework.
\end{abstract}

\begin{IEEEkeywords}
safe reinforcement learning, deep learning in robotics and automation, learning and adaptive systems, data-driven model order reduction
\end{IEEEkeywords}

%
\IEEEpeerreviewmaketitle

\section{Introduction}
%
%
%
%
\IEEEPARstart{R}{ecent} studies of applying reinforcement learning or deep reinforcement learning algorithms to complex, i.e., highly nonlinear and high-dimensional, dynamical systems have demonstrated attractive achievements in various control tasks, e.g., humanoid control~\cite{peng2017deeploco} and robotic manipulator control~\cite{levine2016end}.
However, although the results display the potential of utilizing reinforcement learning algorithms as a substitute for traditional controller design techniques, most of them are still only presented in simulations~\cite{duan2016benchmarking}.
One major impediment against implementing reinforcement learning algorithms on real-world dynamical systems is that, due to the random exploration mechanism, the intermediate policy may lead to dangerous behaviors of the system.
As a result, both the system itself and the environment may be damaged during learning.
In order to apply state-of-the-art reinforcement learning algorithms to real-world control systems, one central problem to address is how to introduce a reliable safety guarantee into the learning process. 

\subsection{Related Work}
\label{sec.related_work}
Safe reinforcement learning (SRL) aims to find an optimal control policy by way of reinforcement learning while ensuring that certain safety conditions are not violated during the learning process.
Although the exact definition of safety in SRL varies in different learning tasks, for instance collision avoidance in autonomous vehicles or crash prevention when controlling a quadcopter, we generally consider the safety condition as neither the system itself nor the environment will be damaged.

SRL in dynamical systems with continuous action space has been a topic of research for over a decade~\cite{garcia2015comprehensive}.
Most previous studies employed a manual control mechanism to ensure the safety of the controlled system.
For instance, in~\cite{abbeel2007application}, an experienced human pilot takes over the control of the helicopter if the learning algorithm places the system in a risky state.
However, such an approach requires a considerable amount of resource to monitor the entire learning process. 
Hence, in most cases, it is not applicable to complex learning tasks.
Another possibility of safely implementing reinforcement learning algorithms on real-world dynamical systems is by transfer learning~\cite{pan2010survey}.
First, a satisfying initial policy is trained in simulation and then transferred to the real-world dynamical system.
In essence, this minimizes required number of learning iterations for obtaining the final policy and thus reduces the risk of encountering dangerous intermediate policy~\cite{christiano2016transfer}. 
However, since the mismatch between simulation and reality is not considered in transfer learning, no reliable safety guarantee is obtained~\cite{huang2017adversarial}. 

In recent studies, SRL in model-free scenarios is usually achieved by solving a constraint satisfaction problem.
For example, constrained policy optimization~\cite{achiam2017constrained} introduces a constraint to the learning process to the effect that the expected return of cost functions should not exceed certain predefined limits.
Alternatively, including an additional risk term in the reward function, such as risk-sensitive reinforcement learning~\cite{shen2014risk}, can also increase the safety of reinforcement learning algorithms.
However, as no system model is directly considered in these approaches, there is still a high possibility that safety conditions are violated, especially in the early learning phase.

When at least an approximated system model is available, a more promising SRL can be realized by combining control-theoretic concepts with reinforcement learning approaches.
For example in~\cite{perkins2002lyapunov,chow2018lyapunov}, Lyapunov functions are employed to compute a sub-region of the state space where safety conditions will never be violated.
The system is then limited to this sub-region during the learning process.
However, finding suitable candidates for Lyapunov functions is challenging if the system dynamics contains uncertainties or is highly nonlinear.

For uncertain dynamical systems, methods based on learning a model of unknown system dynamics~\cite{ostafew2016robust} or of environmental constraints~\cite{sadigh2016safe} are proposed to ensure safety during learning.
For instance, by predicting the system behavior in the worst case, robust model predictive control~\cite{zanon2020safe} is able to provide safety and stability guarantees to reinforcement learning algorithms if the error in the learned model is bounded. 
Besides, \cite{li2021safe} introduces an action governor to correct the applied action when the system is predicted to be unsafe.
However, limited by computational efficiency, these approaches with deterministic safety estimates, i.e., the prediction about the safety of a system state is either safe or unsafe, are usually only applicable to linear systems.
Moreover, the accuracy of the learned model also strongly affects the performance of these approaches.

To relax the demands placed on the system model and extend the SRL to nonlinear systems, instead of deterministic safety estimates, recent studies employ probabilistic safety estimates, in which safety predictions are represented as probabilities~\cite{moldovan2012safe}.
In~\cite{fisac2018general}, for example, modelling uncertainties are approximated by Gaussian process models~\cite{ki2006gaussian}, and a probabilistic safe region is computed by reachability analysis~\cite{bansal2017hamilton}.
Similarly, Gaussian process models are used in~\cite{berkenkamp2016safe,berkenkamp2017safe} to model unknown system dynamics. 
A safe region is then obtained from the probabilistic estimate of the region of attraction (ROA) of a safe equilibrium state. 
The key component of these studies is a forward invariant safe region, such that the learning algorithm has the flexibility to execute desired actions within the safe region.
Safety is ensured by switching to a safety controller whenever the system approaches the boundary of the safe region.
However, the safe region is computed either by solving a partial differential equation in~\cite{fisac2018general} or sampling in~\cite{berkenkamp2017safe}, both of which suffer from the curse of dimensionality.
Moreover, modeling an unknown dynamics or disturbance with Gaussian process models also poses challenges when the system is highly nonlinear and high-dimensional, since both making adequate assumptions about the distribution of dynamics and acquiring a sufficient amount of data are difficult.
Therefore, although approaches like~\cite{fisac2018general,berkenkamp2017safe} enable promising results with low-dimensional dynamical systems\footnote{In this paper we consider dynamical systems with dimensions higher than six as high-dimensional, as in such cases it is computationally difficult to implement traditional methods, such as reachability analysis or sum-of-squares programming, in identifying the safe region.}, they are not directly applicable to complex dynamical systems~\cite{fisac2019bridging}.

Often the motivation for using reinforcement learning algorithms for controller design is to overcome the difficulty of applying model-based controller design approaches to highly nonlinear, high-dimensional and uncertain dynamic system models~\cite{mahmood2018benchmarking,james2020rlbench}. 
In particular, it is challenging to compute a safe region for a complex dynamical system.
For this reason, \cite{zhou2020general} introduces an SRL framework that utilizes a supervisory control strategy based on finding a simplified system by means of physically inspired model order reduction~\cite{schilders2008model}.
A simplified safe region is constructed from the simplified system, which functions as an approximation for the safe region of the full dynamics.
Such a low-dimensional representation of the safe region, which is usually two- or three-dimensional, at least provides safety estimates for the original system states, and it can be updated online during the learning process.
To account for the uncertainty in making safety decisions for the complex dynamics based on a rough low-dimensional reduction, the safety estimate is represented in a probabilistic form.
Then, in accordance with the derived safety estimate, a supervisor is employed to switch the actual applied control action between the learning algorithm and a corrective controller to keep the system safe. 
However, implementing physically inspired model order reduction usually requires a thorough understanding of the system dynamics.
Moreover, multiple performance tests are required before a satisfying simplified system can be found.

\subsection{Contribution}
In this paper, we consider the same supervisory control strategy as used in~\cite{zhou2020general} to construct a general SRL framework that is applicable to complex dynamical systems.
However, to overcome the limitations of physically inspired model order reduction, we propose a novel data-driven approach to identify the supervisor, i.e., the low-dimensional representation of the safe region.
Inspired by transfer learning~\cite{marco2017virtual}, we assume that an approximated system model of the complex dynamical system is available. 
Even though, inevitably, the approximated model displays discrepancies compared with the real system behavior, an initial estimate of safety can usually be obtained by simulating the approximated model.  
For example, while the dynamics of a real-world humanoid cannot be known perfectly, an approximated humanoid model can be constructed in simulation for making predictions.
Hence, by simulating the system, we obtain training data that represents the safety of various original system states.
However, as the state space is high-dimensional, it is infeasible to acquire a sufficient amount of training data to directly learn the safe region of the original system.
To solve this problem, a data-driven approach that computes probabilistic similarities between each training data is proposed to first learn a low-dimensional representative safety feature of the complex dynamical system.
Then, based on the learned feature, a low-dimensional representation of the safe region is identified, which is used as the starting point to SRL in the real system.

Due to the inevitable simulation-to-reality gap, the initial low-dimensional representation of the safe region learned from training data displays discrepancies compared to the real system behavior.
To compensate for this mismatch, we also propose an efficient online adaptation method to update the low-dimensional representation of the safe region.
During the learning process, we receive feedback data about the actual safe region of the real system.
These feedback data are not only used to generate new safety estimates, but they also allow us to adjust our confidence in the reliability of the safety estimates obtained from training data.  
The proposed online adaptation method then updates the low-dimensional representation of the safe region by simultaneously considering the safety estimates derived from training and feedback data.

The contributions of this study are summarized as follows:
\begin{enumerate}
\item We propose a novel data-driven approach that is capable of systematically identifying a low-dimensional representation of the safe region.
In contrast to physically inspired model order reduction, the proposed approach does not require a thorough understanding of system dynamics. 
Moreover, it is applicable to a wide range of dynamical systems, as long as an approximated system model is available.
    
\item We introduce a new online adaptation method for updating the low-dimensional representation of the safe region according to the observed real system behavior.
By fully utilizing the information contained in the feedback data, the update is performed efficiently, while a reasonable amount of feedback data enables an accurate low-dimensional representation of the safe region to be acquired. 

\item Since the proposed approach results in a reliable and representative low-dimensional representation of the safe region, the applicability of the SRL framework is increased. 

\end{enumerate}

The remainder of this paper is organized as follows: a brief introduction to the SRL framework is given in Section~\ref{sec.preliminary}.
Thereafter, we present an overview of our approach in Section~\ref{sec.problem_formulation}. 
In Section~\ref{sec.learning_representation}, we propose a data-driven method to derive a low-dimensional representation of the safe region. 
This is followed by the online adaptation method in Section~\ref{sec.online_adaptation}, which is used to update the low-dimensional representation.
An example is presented in Section~\ref{sec.results} to demonstrate the performance of the proposed approach.
In Section~\ref{sec.discussion}, we discuss several properties of the approach, and Section~\ref{sec.conclusion} concludes the paper.
A table of nomenclatures is included in the supplementary material.

\section{Safe Reinforcement Learning Framework}
\label{sec.preliminary}
In this paper, we consider SRL as to optimize a learning-based policy with respect to a predefined reward function, while ensuring that the system state remains in a safe region of the state space.
In this section, we outline a general SRL framework for dynamical systems, see also~\cite{zhou2020general}.
The SRL framework first identifies a safe state-space region as the safe region.
Then, the learning-based policy has the flexibility to execute desired actions within the safe region.
Once the system state is about to leave the safe region, a corrective controller is applied to drive the system back to a safe state.

\subsection{System Model and Safe Region}
A nonlinear control-affine dynamical system is given by
\begin{equation}
    \dot{x} = f(x) + g(x)u
\label{eq.system_nomial}
\end{equation}
where $x \in \mathcal{X} \subseteq \mathbb{R}^n$ is the $n$-dimensional system state within a connected set $\mathcal{X}$, $u \in \mathcal{U} \subseteq \mathbb{R}^m$ is the $m$-dimensional control input to the system. 
With a given control policy $u = K(x)$, the closed-loop system dynamics is denoted as 
\begin{equation}
    \dot{x} = f_K(x) = f(x) + g(x)K(x).
\label{eq.system_nomial_closed}
\end{equation}
If a system state $x$ satisfies $f_K(x) = 0$, then it is an equilibrium point. 
Any equilibrium point can be shifted to the origin by a state transform.
Therefore, this paper only uses the origin to formulate the safe region.

\newtheorem{assumption}{Assumption}
\begin{assumption}
\label{ass.system_property}
The origin is a safe state and a locally asymptotically stable equilibrium point under the control policy $K(x)$.
\end{assumption}

Based on Assumption~\ref{ass.system_property}, the ROA of the origin is defined as
\begin{equation}
    \mathcal{R} = \{ x_0 \in \mathcal{X} \hspace{1mm} | \hspace{1mm} \lim_{t \rightarrow \infty} \Phi(t;x_0) = 0 \}
\label{eq.ROA}
\end{equation}
where $\Phi(t;x_0)$ is the system trajectory of~(\ref{eq.system_nomial_closed}) that starts at the initial state $x_0$ when time $t = 0$.
The ROA $\mathcal{R}$ is the set of initial states that can be driven back to a safe state, i.e., the origin, under the control policy $K(x)$.
Therefore in this paper, we define the safe region of the SRL framework as follows. 
\newtheorem{definition}{Definition}
\begin{definition}
\label{def.safe_region}
A safe region $\mathcal{S}$ is a closed positive invariant subset of the ROA $\mathcal{R}$ containing the origin.
We consider the system state $x$ as safe if it is in the safe region $\mathcal{S}$.
\end{definition}

\subsection{SRL Framework}

\begin{figure}[!t]
\centering
\includegraphics[width=0.65\linewidth]{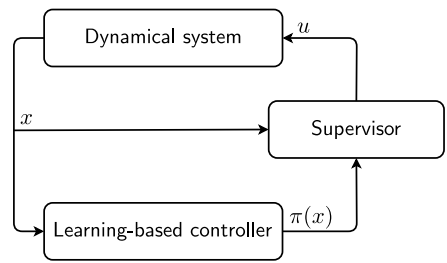}
\caption{SRL framework with a supervisor which decides on the actual applied actions.}
\label{fig.SRL_framework}
\end{figure}

To realize SRL, we keep the system state within the safe region during the learning process.
This is achieved by an SRL framework that adapts a switching supervisory control strategy where the given controller $K(x)$ acts as corrective control and $\pi(x)$ is the learning-based policy that is used while the system state is in the safe region (see Fig.~\ref{fig.SRL_framework}).
A supervisor determines the actual applied actions as
\begin{equation}
    u = \begin{cases} 
    \pi(x), & \text{if } t < t^{\mathrm{safe}} \\
    K(x), & \text{else} 
\end{cases} 
\label{eq.original_supervisor}
\end{equation}
where $t^{\mathrm{safe}}$ is the first time point at which the system state $x$ is on the boundary of the safe region $\mathcal{S}$.

For each learning iteration, the system starts inside the safe region $\mathcal{S}$ for time $t = 0$.
The learning algorithm then updates and executes the learning-based policy $\pi(x)$. 
Since the safe region $\mathcal{S}$ is a closed set and the trajectory is continuous, the system state can only leave the safe region $\mathcal{S}$ by crossing the boundary.
Hence, once the system state $x$ is on the boundary of the safe region $\mathcal{S}$, this learning iteration is terminated at time $t = t^{\mathrm{safe}}$ and the corrective controller $K(x)$ is activated.
For the remaining time of this learning iteration, the corrective controller $K(x)$ attempts to bring the system back to the origin to maintain safety.
After this safety recovery, the learning environment is reset and the next learning iteration starts at time $t = 0$.

\newtheorem{remark}{Remark}
\begin{remark}
\label{remark.safe_region}
In this paper, we only consider the safe region obtained from the ROA $\mathcal{R}$, where stability is used as the safety criterion.
If more safety criteria should be taken into consideration, such as collision avoidance represented as state constraints, the safe region can be constructed using other control-theoretical concepts, e.g., control barrier functions~\cite{romdlony2016stabilization} or invariance functions~\cite{sobotka2007invariance}.
The definition of the safe region does not affect the use of the SRL framework and the proposed approach, as long as the safe region is a closed and control invariant set under a given corrective controller. 
\end{remark}

\subsection{SRL Framework for Complex Dynamical Systems}

The aforementioned SRL framework is not directly applicable to complex dynamical systems, as in such cases, calculating the safe region $\mathcal{S}$ is computationally infeasible~\cite{ahmadi2019dsos}.
An SRL framework based on estimating safety with a low-dimensional representation of the safe region is introduced to overcome this problem~\cite{zhou2020general}. 

Each original system state $x$ is mapped to a low-dimensional safety feature, represented as a simplified state $y \in \mathcal{Y} \subseteq \mathbb{R}^{n_y}$, $n_y \ll n$, through a state mapping $y = \Psi(x)$.
The state mapping is chosen such that safe and unsafe states are separated in the simplified state space $\mathcal{Y}$.
Nevertheless, due to the order reduction, multiple original system states that have different safety properties can map to the same simplified state.
Hence, the safety of the original system state $x$ is estimated by the safety of its corresponding simplified state $y$ in a probabilistic form as
\begin{equation}
    p(x \in \mathcal{S}) = \Gamma(y)|_{y = \Psi(x)} \sim [0,1]
\label{eq.safety_gamma}
\end{equation}
where $\Gamma(y)$ is a function defined over the simplified state space $\mathcal{Y}$ and is referred to as the \textit{safety assessment function (SAF)} in this paper.
Not only does the SAF $\Gamma(y)$ encode information relating to the safety of the simplified state $y$, it also includes the uncertainty involved in making predictions for a high-dimensional state by using a low-dimensional reduction.
In Section~\ref{sec.learning_representation}, we demonstrate how to efficiently identify the state mapping $y = \Psi(x)$ as well as the SAF $\Gamma(y)$ using a data-driven method.

For a given SAF $\Gamma(y)$, the probability $p(x \in \mathcal{S})$ depends only on the simplified state $y$.
Therefore, by introducing a predefined probability threshold $p_t$, we obtain a low-dimensional representation of the safe region, denoted as $\mathcal{S}_y$, in the simplified state space $\mathcal{Y}$
\begin{equation}
\mathcal{S}_y = \{ y \in \mathcal{Y} \hspace{1mm} | \hspace{1mm} \Gamma(y) > p_t \}
\label{eq.safe_region_low}
\end{equation}
which works as an approximation of the high-dimensional safe region $\mathcal{S}$.
The supervisor~(\ref{eq.original_supervisor}) is thus modified to
\begin{equation}
     u = \begin{cases} 
    \pi(x), & \text{if } t < t^{\mathrm{safe'}} \\
    K(x), & \text{else }
\end{cases} 
\label{eq.prob_supervisor}
\end{equation}
where $t^{\mathrm{safe'}}$ denotes the first time point at which the probability $p(x \in \mathcal{S})$ is not larger than the threshold $p_t$, i.e., $p(x \in \mathcal{S}) = \Gamma (y) \leq p_t$.
More details of this SRL framework are given in~\cite{zhou2020general}.

\section{Overview of the Approach}
\label{sec.problem_formulation}
The essential factor when applying the SRL framework to complex dynamical systems is finding a reliable low-dimensional representation of the safe region $\mathcal{S}_y$.
In order to overcome the limitations of physically inspired model order reduction, we propose a novel data-driven approach to identify the low-dimensional representation of the safe region $\mathcal{S}_y$, together with a new online adaptation method to efficiently update the learned low-dimensional representation.

We consider a scenario in which the complex dynamical system, referred to as the real system, has partially unknown dynamics.
However, we assume that a nominal approximated system model is available and can be used to roughly predict the real system behavior.
The nominal system model is assumed to be represented by (\ref{eq.system_nomial}). 
The real system model is then given as
\begin{equation}
    \dot{x} = f(x) + g(x)u + d(x)
\label{eq.system_real}
\end{equation}
where $d(x)$ is the unknown, unmodelled part of the system dynamics.
For brevity, we refer to the nominal and the real systems as \textit{simulation} and \textit{reality}, respectively.

Due to the highly nonlinear and high-dimensional dynamics, the direct calculation of the safe region is computationally infeasible for both the nominal and the real systems. 
Besides, although the real system provides exact safety information, in general it is expensive to collect data directly on the real system.
In contrast, simulating the nominal system is usually efficient and allows a sufficient amount of data to be obtained for finding a low-dimensional safety representation.
However, due to the unknown term $d(x)$, such data is inaccurate and has to be modified to account for the real system behavior.

Based on these facts, to construct a reliable low-dimensional representation of the safe region $\mathcal{S}_y$ for the real system, we propose the approach outlined in Fig.~\ref{fig.overview} (a complete work-flow is given in the supplementary material).
It consists of two parts that solve the following two problems, respectively:
\begin{enumerate}
    \item How to derive and initialize the low-dimensional representation of the safe region $\mathcal{S}_y$ by using the nominal system model.
    \item How to update the low-dimensional representation of the safe region $\mathcal{S}_y$ online with the observed real system behavior.
\end{enumerate}

\begin{figure}[!t]
\centering
\includegraphics[width=.9\linewidth]{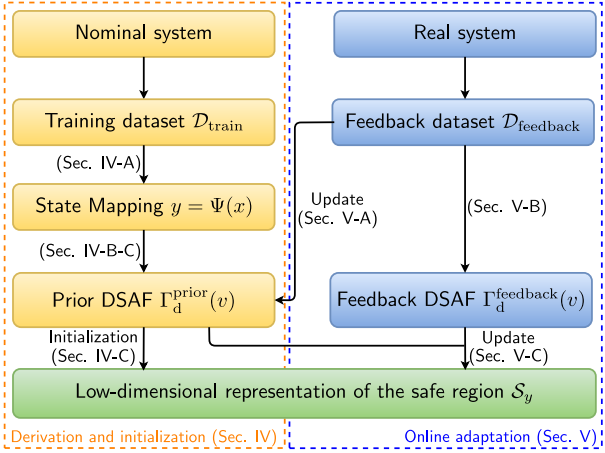}
\caption{Overview of the proposed approach. The low-dimensional representation $\mathcal{S}_y$ is initialized using the training dataset $\mathcal{D}_{\mathrm{train}}$ obtained from the nominal system. 
Once we collect the feedback dataset $\mathcal{D}_{\mathrm{feedback}}$ on the real system, the low-dimensional representation $\mathcal{S}_y$ is updated using the proposed online adaptation method.}
\label{fig.overview}
\end{figure}

\subsection*{Part 1) Derivation and Initialization}
Since no information about uncertainty $d(x)$ is available prior to the learning process, the corrective controller $K(x)$ is designed for the nominal system model (\ref{eq.system_nomial}).
Although the safe region of the nominal system is unknown, 
its simulation is possible and delivers a dataset as follows.

\begin{definition}
\label{def.training_data}
The training dataset of $k_t$ training data is given as
\begin{equation}
\mathcal{D}_{\mathrm{train}} = \{D_{\mathrm{train}}^1, D_{\mathrm{train}}^2, \ldots,D_{\mathrm{train}}^{k_t} \}.
\label{eq.training_dataset}
\end{equation}
It contains the simulation results that state whether the safety recovery is successful or not for different system states $x$ under the corrective controller $K(x)$.  
The $i$-th training data consists of three elements 
\begin{equation}
    D_{\mathrm{train}}^i = \{ x_{\mathrm{sim}}^i, s_{\mathrm{sim}}(x_{\mathrm{sim}}^i), \Phi_{\mathrm{sim}} (t;x_{\mathrm{sim}}^i) \}.
\label{eq.training_data}
\end{equation}
$x_{\mathrm{sim}}^i$ is the initial system state in which the corrective controller $K(x)$ is activated. $s_{\mathrm{sim}}(x_{\mathrm{sim}}^i)$ is the safety label that represents the result of safety recovery for the state $x_{\mathrm{sim}}^i$.  
We denote $s_{\mathrm{sim}}(x_{\mathrm{sim}}^i) = 1$ if the system state $x_{\mathrm{sim}}^i$ is safe under the corrective controller $K(x)$, and $s_{\mathrm{sim}}(x_{\mathrm{sim}}^i) = 0$ if it is not. 
$\Phi_{\mathrm{sim}} (t;x_{\mathrm{sim}}^i)$ is the corresponding system trajectory of the safety recovery that starts at $x_{\mathrm{sim}}^i$ when time $t=0$.
The subscript $\mathrm{sim}$ indicates that the data is collected by using the nominal system model.
\end{definition}

The low-dimensional representation of the safe region $\mathcal{S}_y$ is thus derived and initialized by using the training dataset $\mathcal{D}_{\mathrm{train}}$.
To do this, we first identify the state mapping $y = \Psi(x)$ using a data-driven method that computes the probabilistic similarity between each training data (Section~\ref{sec.tsne}).
Then to facilitate an efficient computation, we discretize the simplified state space $\mathcal{Y}$ into grid cells and assign an index vector $v \in \mathbb{Z}_{+}^{n_y}$ to each grid cell.
By assuming that the SAF $\Gamma(y)$ is constant in each grid cell, we thus obtain a \textit{discretized safety assessment function (DSAF)} $\Gamma_{\mathrm{d}}(v)$.  
A discretized low-dimensional representation of the safe region $\mathcal{S}_y$ is then given by applying the probability threshold $p_t$ on the DSAF $\Gamma_{\mathrm{d}}(v)$ (Section~\ref{sec.formulate_mapping}).    
To enable the SRL framework on the real system, we also calculate an initial estimate of the DSAF $\Gamma_{\mathrm{d}}(v)$, denoted as the prior DSAF $\Gamma_{\mathrm{d}}^{\mathrm{prior}}(v)$, from the training dataset $\mathcal{D}_{\mathrm{train}}$. 
It is then used to initialize the low-dimensional representation of the safe region $\mathcal{S}_y$ (Section~\ref{sec.initial_mapping}).
Further details of Part 1) are given in Section~\ref{sec.learning_representation}.

\subsection*{Part 2) Online Adaptation}
Due to the unknown part of the system dynamics $d(x)$, there is inevitably a mismatch between simulation and reality.
In order to compensate for this mismatch, we update the low-dimensional representation $\mathcal{S}_y$ by accounting for the real system behavior.

Each time the corrective controller $K(x)$ is activated during learning, we observe feedback data about the real safe region.
The set of feedback data is defined as follows.

\begin{definition}
\label{def.feedback_data}
The feedback dataset of $k_f$ feedback data is given as
\begin{equation}
    \mathcal{D}_{\mathrm{feedback}} = \{D_{\mathrm{feedback}}^1, D_{\mathrm{feedback}}^2,\ldots, D_{\mathrm{feedback}}^{k_f} \}.
\label{eq.feedback_dataset}
\end{equation} 
It contains the results of safety recovery from implementing the corrective controller $K(x)$ on the real system.
The $i$-th feedback data is 
\begin{equation}
    D_{\mathrm{feedback}}^i = \{ x_{\mathrm{real}}^i, s_{\mathrm{real}}(x_{\mathrm{real}}^i), \Phi_{\mathrm{real}} (t;x_{\mathrm{real}}^i) \}.
\label{eq.feedback_data}
\end{equation}
While $x_{\mathrm{real}}^i$, $s_{\mathrm{real}}(x_{\mathrm{real}}^i)$ and $\Phi_{\mathrm{real}} (t;x_{\mathrm{real}}^i)$ have the same meaning as in Definition~\ref{def.training_data}, the subscript $\mathrm{real}$ indicates here that the data is collected on the real system.
\end{definition}

Since collecting data on the real system, e.g., real-world robots, is usually expensive and time-consuming, in most cases the feedback dataset $\mathcal{D}_{\mathrm{feedback}}$ has a limited size.
Therefore, the low-dimensional representation of the safe region $\mathcal{S}_y$ needs to be updated in a data-efficient manner.
To achieve this, we propose an online adaptation method, as given in Section~\ref{sec.online_adaptation}.
It comprises three steps:
First, we modify the prior DSAF $\Gamma_{\mathrm{d}}^{\mathrm{prior}}(v)$ by changing our confidence in its reliability using the feedback dataset $\mathcal{D}_{\mathrm{feedback}}$ (Section~\ref{sec.update_prior}).
Second, to fully utilize the valuable information contained in the feedback dataset $\mathcal{D}_{\mathrm{feedback}}$, we generate another feedback DSAF $\Gamma_{\mathrm{d}}^{\mathrm{feedback}}(v)$ (Section~\ref{sec.update_feedback}).
Third, the two DSAFs are fused to obtain a more accurate DSAF $\Gamma_{\mathrm{d}}(v)$, which is then used to update the low-dimensional representation $\mathcal{S}_y$ (Section~\ref{sec.update_safety}).

\section{Learning a Low-dimensional Representation of the Safe Region}
\label{sec.learning_representation}
To derive the low-dimensional representation of the safe region $\mathcal{S}_y$, two components have to be determined: the state mapping $y = \Psi(x)$, which gives the low-dimensional safety feature, and the SAF $\Gamma(y)$, which predicts the safety of original system states.
In this section, we present a data-driven method for identifying the low-dimensional representation of the safe region $\mathcal{S}_y$.
It utilizes a technique called t-distributed stochastic neighbor embedding (t-SNE)~\cite{maaten2008visualizing}, which was originally proposed for visualizing high-dimensional data.

\subsection{Identifying the State Mapping with t-SNE}
\label{sec.tsne}
To identify the state mapping $y = \Psi(x)$, we first find the realization of the low-dimensional safety feature, i.e., the values of simplified states $y^1,\ldots, y^{k_t}$, that best corresponds with the training dataset $\mathcal{D}_{\mathrm{train}}$ by revising t-SNE. 
Through measuring the similarity between each high-dimensional data point, t-SNE defines a two- or three-dimensional data point such that similar high-dimensional data points are represented by nearby low-dimensional data points with high probability.
It uses Euclidean distance between each pair of high-dimensional data points as the metric for measuring similarity.
However, since our purpose is to construct the low-dimensional representation of the safe region $\mathcal{S}_y$, we are more interested in safety rather than just distance.
Accordingly, we propose a new metric that considers similarity and safety at the same time.

The general motivation for determining the simplified state $y$ is that the safe and unsafe original system states $x$ should be separated in the simplified state space $\mathcal{Y}$.
Since, in this paper, the safe region is defined with respect to the ROA, the trajectories of safe initial states will converge to the origin, while unsafe initial states will have divergent trajectories.
Hence, if two original system states $x$ have similar trajectories under the corrective controller $K(x)$, then ideally they should also have nearby corresponding simplified states $y$ (see Fig.~\ref{fig.tSNE}).
Based on this, we first calculate the pairwise trajectory distance $\omega_{ij}$ between the $i$-th and $j$-th training data, using dynamic time warping (DTW) as
\begin{equation}
    \omega_{ij} = \mathrm{dtw}(\Phi_{\mathrm{sim}} (t;x_{\mathrm{sim}}^i), \Phi_{\mathrm{sim}} (t;x_{\mathrm{sim}}^j)) 
\label{eq.similarity_trajectory}
\end{equation}
where $\mathrm{dtw}(\cdot)$ represents the DTW measurement. 
We thus have $\omega_{ij} = 0$ if $i=j$, and the more similar the trajectories are, the smaller the value of $\omega_{ij}$ is. 

\begin{remark}
Besides DTW, other trajectory distance measures, e.g., Fr{\'e}chet distance, can also be used in~(\ref{eq.similarity_trajectory}).
Changing the distance metric does not affect the applicability of the proposed approach.
However, DTW turns out to be a more suitable metric for trajectories of the dynamical systems we investigated.
\label{remark.distance_metric}
\end{remark}

\begin{figure}[!t]
\centering
\includegraphics[width=0.8\linewidth]{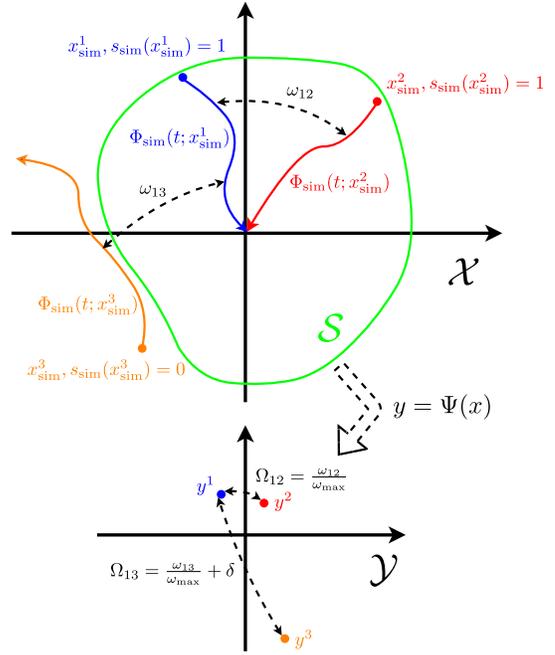}
\caption{The distances $\Omega_{12}$ and $\Omega_{13}$ are computed for three training data $D_{\mathrm{train}}^1$, $D_{\mathrm{train}}^2$, $D_{\mathrm{train}}^3$ using the trajectory distances $\omega_{12}$, $\omega_{13}$ and the safety labels $s_{\mathrm{sim}}(x_{\mathrm{sim}}^1)$, $s_{\mathrm{sim}}(x_{\mathrm{sim}}^2)$, $s_{\mathrm{sim}}(x_{\mathrm{sim}}^3)$. 
Based on these distances, t-SNE calculates the values of corresponding simplified states $y$, where similar and dissimilar training data are modeled by nearby and distant simplified states, respectively.}
\label{fig.tSNE}
\end{figure}

While, in general, the trajectory distance $\omega_{ij}$ reflects the probability that original system states $x_{\mathrm{sim}}^i$ and $x_{\mathrm{sim}}^j$ have the same safety property, it is still possible that safe and unsafe states have similar trajectories. 
To obtain a better low-dimensional safety feature, we thus modify the trajectory distance $\omega_{ij}$ in relation to the safety label $s_{\mathrm{sim}}(x_{\mathrm{sim}})$ and compute the distance $\Omega_{ij}$ between the $i$-th and $j$-th training data as
\begin{equation}
    \Omega_{ij} = \begin{cases} 
    {\displaystyle \frac{\omega_{ij}}{\omega_{\mathrm{max}}} } + \delta, & \text{if } s_{\mathrm{sim}}(x_{\mathrm{sim}}^i) \neq s_{\mathrm{sim}}(x_{\mathrm{sim}}^j) \vspace{2mm} \\ 
   {\displaystyle  \frac{\omega_{ij}}{\omega_{\mathrm{max}}} }, & \text{if } s_{\mathrm{sim}}(x_{\mathrm{sim}}^i) = s_{\mathrm{sim}}(x_{\mathrm{sim}}^j)
\end{cases}
\label{eq.similarity_modified}
\end{equation}
where $\delta$ is a constant and $\omega_{\mathrm{max}} = \max_{i,j} \omega_{ij}$ is the maximum trajectory distance within the training dataset $\mathcal{D}_{\mathrm{train}}$.
The distance $\Omega_{ij}$ is then used as the new metric for t-SNE to measure the similarities between different training data.

In our experiments, we find that a small value of $\delta$ is sufficient for providing a satisfying result of t-SNE (in this paper, for example, we use $\delta = 0.01$).
A large value of $\delta$, in contrast, may lead to information contained in trajectories being ignored, which can reduce the representation power of the learned simplified states $y$.
A sensitivity analysis of the parameter $\delta$ is provided in the supplementary material.

After computing the distance $\Omega_{ij}$ between each pair of training data, we apply t-SNE on the training dataset $\mathcal{D}_{\mathrm{train}}$ to derive a realization of the low-dimensional safety feature. 
To do this, we modify the conditional probability $p_{j|i}$ of t-SNE~\cite{maaten2008visualizing} using the distance $\Omega_{ij}$ as
\begin{equation}
    p_{j|i} = \frac{\mathrm{exp}(-\Omega_{ij}^2 / 2\sigma_i^2)}{ \displaystyle \sum_{k\neq i}\mathrm{exp}(-\Omega_{ik}^2/2\sigma_i^2)}
\label{eq.tsne_p} 
\end{equation}
where $\sigma_i$ is the variance of the Gaussian distribution that is centered on the state $x_{\mathrm{sim}}^i$. 
The remaining computations are the same as in t-SNE.
Since this part makes no contribution, the main steps involved in performing t-SNE are given only in the supplementary material.  
More details are available in~\cite{maaten2008visualizing}.

Using t-SNE, we obtain the values of simplified states $y^1,\ldots, y^{k_t}$ that correspond to the training dataset $\mathcal{D}_{\mathrm{train}}$ as an initial realization of the low-dimensional safety feature.
Such a realization models similar training data with nearby simplified states, e.g., $y^1$ and $y^2$ in Fig.~\ref{fig.tSNE}, and dissimilar training data with distant simplified states, e.g., $y^1$ and $y^3$ in Fig.~\ref{fig.tSNE}.
In general, the simplified state $y$ is chosen to be two- or three-dimensional, i.e., $y \in \mathbb{R}^{n_y}$ with $n_y = 2$ or $n_y =3$.
In this paper, we set $n_y = 2$.

Note that t-SNE only determines the values of simplified states but gives no expression of the state mapping $y = \Psi(x)$.
Therefore, to identify the state mapping $y = \Psi(x)$, we learn a function approximator using the values of simplified states $y^1,\ldots, y^{k_t}$ obtained from t-SNE and the original system states $x_{\mathrm{sim}}^1, \ldots, x_{\mathrm{sim}}^{k_t}$ contained in the training dataset $\mathcal{D}_{\mathrm{train}}$.
This function approximator, e.g., we use a neural network in this paper, is then utilized to represent the state mapping $y = \Psi(x) = \mathrm{NN}(x)$.

\begin{remark}
\label{remark.function_approximator}
Different forms of function approximator, for instance, a Gaussian process, can be used to describe the state mapping $y = \Psi(x)$.
The selection of function approximator depends mainly on the available number of training data.
\end{remark}

Due to the approximation error in the function approximator, some original system states $x$ may have slightly different values in their simplified states $y$ when comparing the initial realization obtained from t-SNE with the one computed from the learned state mapping $y = \Psi(x)$ (for an example, see the simulations in Section~\ref{sec.results_identify} and in particular Fig.~\ref{fig.results_learning}).
Hence, to reduce the influence of this issue on deriving the low-dimensional representation of the safe region $\mathcal{S}_y$, we compute the values of simplified states $y^1,\ldots, y^{k_t}$ once again with the learned state mapping.
This final realization of the low-dimensional safety feature is then used for formulating the SAF $\Gamma(y)$.

\subsection{Belief Function Theory and DSAF}
\label{sec.formulate_mapping}

\begin{figure}[!t]
\centering
\includegraphics[width=0.8\linewidth]{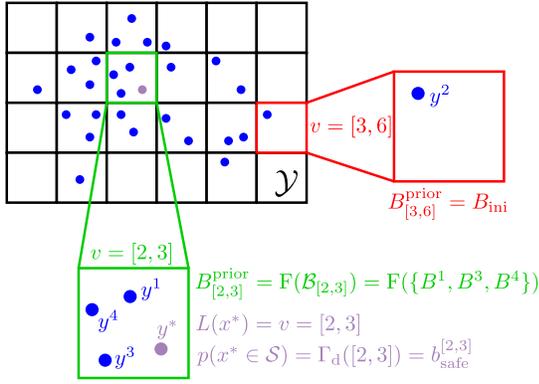}
\caption{The simplified state space $\mathcal{Y}$ is discretized into grid cells. The location of each grid cell is indicated by the index vector $v$. 
The safety of a new original system state, e.g. $x^*$, is estimated by way of the corresponding belief mass as $p(x^* \in \mathcal{S}) = \Gamma_{\mathrm{d}}([2,3]) = b_\mathrm{safe}^{[2,3]}$, where $L(x^*) = v = [2,3]$.
The prior estimate $B_v^{\mathrm{prior}}$ of an index vector $v$ is either obtained by fusing all BBAs within the set $\mathcal{B}_v$, e.g., $B_{[2,3]}^{\mathrm{prior}} = \mathrm{F}(\mathcal{B}_{[2,3]})$, or set to an initial estimate, e.g., $B_{[3,6]}^{\mathrm{prior}} = B_{\mathrm{ini}}$.}
\label{fig.wbf}
\end{figure}

Once the state mapping $y = \Psi(x)$ is determined, we are able to generate the SAF $\Gamma(y)$ using the training dataset $\mathcal{D}_{\mathrm{train}}$.
However, due to the limited size of the training data, it is difficult to construct the SAF $\Gamma(y)$ over the continuous simplified state space $\mathcal{Y}$.
Therefore, we discretize the simplified state space $\mathcal{Y}$.

The range of the simplified state space $\mathcal{Y}$ is determined by the maximum and minimum values of the simplified states $y^1,\ldots, y^{k_t}$ in each dimension.
We then discretize the simplified state space $\mathcal{Y}$ into grid cells with a predefined step size. 
Each grid cell is assigned an index vector $v \in \mathbb{Z}_+^{2}$ to indicate its position in the simplified state space $\mathcal{Y}$; for example, $v = [2,3]$ refers to the grid cell that is located at the second row and third column (see Fig.~\ref{fig.wbf}).
A locating function is defined as follows.

\begin{definition}
\label{def.locating_functioon}
By locating the simplified state $y = \Psi(x)$ for an original system state $x$ in the simplified state space $\mathcal{Y}$, the locating function $L(x)$ returns the index vector $v$ of the grid cell that it belongs to.  
\end{definition}

By assuming that the SAF $\Gamma(y)$ is constant in each grid cell, we obtain a DSAF $\Gamma_{\mathrm{d}}(v)$ that we will have to define. 
Then, instead of using the simplified state $y$, the safety of an original system state $x$ is estimated by way of the index vector $v$ as
\begin{equation}
    p(x \in \mathcal{S}) = \Gamma_{\mathrm{d}}(v)|_{v = L(x)} \sim [0,1].
\label{eq.safety_gamma_discretized}
\end{equation}

In general, the DSAF $\Gamma_{\mathrm{d}}(v)$ for an index vector $v$ can be approximated by the number of safe and unsafe original system states $x$ that map to the corresponding grid cell, i.e., $L(x) = v$.
However, due to the high-dimensional original system state space, it is, in most cases, infeasible to acquire a sufficient amount of data to derive an accurate estimate. 
To solve this problem, we propose using belief function theory~\cite{shafer1976mathematical} to describe the DSAF $\Gamma_{\mathrm{d}}(v)$, where the uncertainty caused by insufficiency in the data amount is considered by a subjective probability~\cite{josang2016subjective}.

Belief function theory is a general approach to modeling epistemic uncertainty that uses a belief mass to represent the probability of the occurrence of an event.
The assignment of belief masses to all possible events is denoted as the basic belief assignment (BBA).
The belief mass on the entire event domain, i.e., the probability that one arbitrary event happens, indicates the subjective uncertainty of the estimate~\cite{josang2016subjective}.
According to this, we define a BBA $B_v$ separately for each index vector $v$ as follows.

\begin{definition}
\label{def.bba_v}
The BBA $B_v$ for an index vector $v$ is given as
\begin{equation}
B_v = (b_{\mathrm{safe}}^v, b_{\mathrm{unsafe}}^v, \mu^v)
\label{eq.bba_v}
\end{equation}
which represents the belief about the value of the DSAF $\Gamma_{\mathrm{d}}(v)$ for the index vector $v$.
The belief masses $b_{\mathrm{safe}}^v$ and $b_{\mathrm{unsafe}}^v$ are the probabilities of the occurrence of two complementary events, i.e., $p(x \in \mathcal{S})$ and $p(x \notin \mathcal{S})$, where the original system state $x$ has the index vector $v$ from the locating function $L(x)$.
$\mu^v$ is the subjective uncertainty that reflects the confidence level of estimating the safety.
$\mu^v = 0$ means we believe that the estimate is absolutely correct.
It holds that
\begin{equation}
b_{\mathrm{safe}}^v + b_{\mathrm{unsafe}}^v + \mu^v = 1
\label{eq.bba_requirement}    
\end{equation}
and $b_{\mathrm{safe}}^v$, $b_{\mathrm{unsafe}}^v$, $\mu^v$ all lie within the interval $[0,1]$.
\end{definition}

Hence the DSAF $\Gamma_{\mathrm{d}}(v)$ is given by the belief masses $b_\mathrm{safe}^{v}$ of the corresponding BBAs $B_v$ as 
\begin{equation}
\Gamma_{\mathrm{d}}(v) = b_\mathrm{safe}^{v}.
\label{eq.safety_estimate}
\end{equation}
The low-dimensional representation of the safe region $\mathcal{S}_y$ is then defined among the discretized simplified state space as
\begin{equation}
\mathcal{S}_y = \{ v \hspace{1mm} | \hspace{1mm} \Gamma_{\mathrm{d}}(v) = b_\mathrm{safe}^{v} > p_t \}
\label{eq.map_m}
\end{equation}
where $p_t$ is the predefined probability threshold.
In the next subsection, we explain how to initialize the DSAF $\Gamma_{\mathrm{d}}(v)$ so as to enable the application of the SRL framework on the real system.

\subsection{Initializing the DSAF from Training Data}
\label{sec.initial_mapping}

Since each training data provides information on the value of the DSAF $\Gamma_{\mathrm{d}}(v)$, the low-dimensional representation of the safe region $\mathcal{S}_y$ is initialized using the training dataset $\mathcal{D}_{\mathrm{train}}$.
By considering each training data as a belief source, we formulate the following BBAs for all training data and later fuse them to derive an initial estimate of the DSAF $\Gamma_{\mathrm{d}}(v)$.

\begin{definition}
\label{def.bba_i}
The BBA $B^i$ obtained from the $i$-th training data $D_{\mathrm{train}}^i$ is defined as
\begin{equation}
    B^i = (b_{\mathrm{safe}}^i, b_{\mathrm{unsafe}}^i, \mu^i).
\label{eq.bba}
\end{equation}
It represents the belief about the value of the DSAF $\Gamma_{\mathrm{d}}(v)$ for the index vector $v= L(x_{\mathrm{sim}}^i)$, where the belief source is the $i$-th training data.
$b_{\mathrm{safe}}^i$, $b_{\mathrm{unsafe}}^i$ and $\mu^i$ have the same meanings as in Definition~\ref{def.bba_v}.
\end{definition}

Due to the inevitable simulation-to-reality gap, we initialize the BBA of each training data with a constant uncertainty $\mu_{\mathrm{ini}} >0$ as
\begin{equation}
B^i = \begin{cases} 
   (1 - \mu_{\mathrm{ini}}, 0, \mu_{\mathrm{ini}}), & \text{if } s_{\mathrm{sim}}(x_{\mathrm{sim}}^i) = 1 \\
   (0, 1-\mu_{\mathrm{ini}}, \mu_{\mathrm{ini}}), & \text{if } s_{\mathrm{sim}}(x_{\mathrm{sim}}^i) = 0
\end{cases}
\label{eq.bba_ini}
\end{equation}
where $i = 1,\ldots,k_t$.
Since no information about the unknown term $d(x)$ is available prior to the learning process on the real system, the initial subjective uncertainties are chosen to be the same for all BBAs.
Later in the online adaptation method, the subjective uncertainties are updated by using the feedback data to realize more accurate safety estimates.

For each index vector $v$, the BBA $B_v$ is then estimated by using the BBAs of the training data.
To achieve this, we first generate a set of BBAs $\mathcal{B}_v$ for each index vector $v$
\begin{equation}
    \mathcal{B}_v = \{B^i \hspace{1mm}|\hspace{1mm} L(x_{\mathrm{sim}}^i) =v \}.
\label{eq.dataset_bv}
\end{equation}
which contains the BBAs of the training data whose original system state $x_{\mathrm{sim}}$ corresponds to the index vector $v$. 
The size of the set $\mathcal{B}_v$ is denoted as $k_v$.

Every BBA in the set $\mathcal{B}_v$ provides a belief about the value of the DSAF $\Gamma_{\mathrm{d}}(v)$ for the index vector $v$.
Hence, an estimate of the BBA $B_v$ is derived by fusing all BBAs within the set $\mathcal{B}_v$ as
\begin{equation}
B_v^{\mathrm{prior}} = (b_{\mathrm{safe}}^{v,\mathrm{prior}}, b_{\mathrm{unsafe}}^{v,\mathrm{prior}}, \mu^{v,\mathrm{prior}}) = \begin{cases} 
   \mathrm{F}(\mathcal{B}_v), & \text{if } k_v \geq k_{\mathrm{min}} \\
   B_{\mathrm{ini}}, & \text{else }
\end{cases}
\label{eq.bba_fusion}
\end{equation}
where $B_{\mathrm{ini}}$ is an initial estimate that represents our guess about the BBA $B_v$ when no training data is available (see Fig.~\ref{fig.wbf}).
$\mathrm{F}(\cdot)$ is a fusion operation among the set $\mathcal{B}_v$, which is referred to as weighted belief fusion and is defined according to~\cite{josang2018categories} as 
\begin{equation}
    b_\mathrm{safe}^{v,\mathrm{prior}} = \frac{ {\displaystyle \sum_{B^i \in \mathcal{B}_v} } b_{\mathrm{safe}}^i(1 - \mu^i) {\displaystyle  \prod_{ \substack {B^j \in \mathcal{B}_v \\  i \neq j} }} \mu^j }{\displaystyle \left(\sum_{B^i \in \mathcal{B}_v} \prod_{ \substack {B^j \in \mathcal{B}_v \\  i \neq j} } \mu^j \right) - k_v \prod_{B^i \in \mathcal{B}_v} \mu^i } 
\label{eq.wbf_safe}
\end{equation}
\begin{equation}
    b_\mathrm{unsafe}^{v,\mathrm{prior}} = \frac{ {\displaystyle \sum_{B^i \in \mathcal{B}_v} } b_{\mathrm{unsafe}}^i(1 - \mu^i) {\displaystyle  \prod_{ \substack {B^j \in \mathcal{B}_v \\  i \neq j} }} \mu^j }{\displaystyle \left(\sum_{B^i \in \mathcal{B}_v} \prod_{ \substack {B^j \in \mathcal{B}_v \\  i \neq j} } \mu^j \right) - k_v \prod_{B^i \in \mathcal{B}_v} \mu^i } 
\label{eq.wbf_unsafe} 
\end{equation}
\begin{equation}
    \mu^{v,\mathrm{prior}} = \frac{\displaystyle  \left( k_v - \sum_{B^i \in \mathcal{B}_v} \mu^i \right) \prod_{B^i \in \mathcal{B}_v} \mu^i} {\displaystyle \left(\sum_{B^i \in \mathcal{B}_v} \prod_{ \substack {B^j \in \mathcal{B}_v \\  i \neq j} } \mu^j \right) - k_v \prod_{B^i \in \mathcal{B}_v} \mu^i }. 
\label{eq.wbf_uncertainty}
\end{equation}
We refer to this estimate of the BBA $B_v$ as the prior estimate $B_v^{\mathrm{prior}}$. 
Since it is still likely to be imprecise if the available number of training data is too small, the fusion is performed only when the number of BBAs contained in the set $\mathcal{B}_v$ is not smaller than a minimum number $k_{\mathrm{min}}$.
Otherwise, the prior estimate $B_v^{\mathrm{prior}}$ is set to the initial estimate $B_{\mathrm{ini}}$.  
We use $B_{\mathrm{ini}} = (0.05,0.55,0.4)$ in our experiments. 
This means that if there is very little experience available in the form of training data for one grid cell, then the respective states will initially be considered as unsafe.
The resulting prior estimate $B_v^{\mathrm{prior}}$ is a BBA that satisfies 
\begin{equation}
    b_\mathrm{safe}^{v,\mathrm{prior}} + b_\mathrm{unsafe}^{v,\mathrm{prior}} + \mu^{v,\mathrm{prior}} = 1
\end{equation}
and $b_\mathrm{safe}^{v,\mathrm{prior}}$, $b_\mathrm{unsafe}^{v,\mathrm{prior}}$, $\mu^{v,\mathrm{prior}}$ all lie within the interval $[0,1]$.

After computing the prior estimate $B_v^{\mathrm{prior}}$ for all index vectors $v$, we thus obtain a prior DSAF $\Gamma_{\mathrm{d}}^{\mathrm{prior}}(v)$ 
\begin{equation}
\Gamma_{\mathrm{d}}^{\mathrm{prior}}(v) = b_\mathrm{safe}^{v,\mathrm{prior}}
\label{eq.prior_mapping}
\end{equation}
which delivers an estimate of the DSAF $\Gamma_{\mathrm{d}}(v)$ that is derived from the training data.
The low-dimensional representation of the safe region $\mathcal{S}_y$ is then initialized by letting $\Gamma_{\mathrm{d}}(v) = \Gamma_{\mathrm{d}}^{\mathrm{prior}}(v)$.
In the next section, we propose an online adaptation method to update the DSAF $\Gamma_{\mathrm{d}}(v)$ using feedback data, to account for the unknown part of the system dynamics $d(x)$.

\section{Online Adaptation of the Safety Assessment Function}
\label{sec.online_adaptation}
In the early learning phase with the real system, the prior DSAF $\Gamma_{\mathrm{d}}^{\mathrm{prior}}(v)$ allows a rough estimate of the safety of an original system state.
During the learning process, the feedback data is used to update the DSAF $\Gamma_{\mathrm{d}}(v)$ to achieve more accurate safety estimates.
Each update iteration of the DSAF $\Gamma_{\mathrm{d}}(v)$ consists of three steps.
First, we modify the prior DSAF $\Gamma_{\mathrm{d}}^{\mathrm{prior}}(v)$ by revising the subjective uncertainties of the BBAs of the training data.
Second, we compute a feedback DSAF $\Gamma_{\mathrm{d}}^{\mathrm{feedback}}(v)$ using the feedback data.
Third, the updated DSAF $\Gamma_{\mathrm{d}}(v)$ is obtained by fusing the prior and feedback DSAFs. 
Note that each time the corrective controller $K(x)$ is activated for the real system, we obtain new feedback data.
Hence the size of the feedback dataset $\mathcal{D}_{\mathrm{feedback}}$ increases incrementally during the learning process.
For simplicity, we consider the feedback dataset $\mathcal{D}_{\mathrm{feedback}}$ of size $k_f$ in this section.
Details of the online adaptation method are given in the following.

\subsection{Update of the Prior DSAF with Feedback Data}
\label{sec.update_prior}
The prior DSAF $\Gamma_{\mathrm{d}}^{\mathrm{prior}}(v)$ is constructed using the training dataset $\mathcal{D}_{\mathrm{train}}$, in which the uncertainty caused by the unknown term $d(x)$ is represented by the subjective uncertainty $\mu^i$ of each BBA $B^i$.
Hence, the update of the prior DSAF $\Gamma_{\mathrm{d}}^{\mathrm{prior}}(v)$ will now modify the subjective uncertainties by accounting for new information given by feedback data.
For this, we assume that original system states that are in close proximity to each other most probably have similar safety properties.

\begin{assumption}
The probability $p(s_{\mathrm{real}}(x^1) = s_{\mathrm{real}}(x^2))$ that two original system states $x^1$ and $x^2$ have the same safety property on the real system is inversely proportional to their Euclidean distance in the original state space $||x^1 - x^2||$.
\label{ass.safety}
\end{assumption}

In addition, we define a function $P(x)$ to quantify the similarity with respect to the safety of nominal and real system trajectories that start in the same initial original system state $x$
\begin{equation}
    P(x) = p( s_{\mathrm{sim}}(x) = s_{\mathrm{real}}(x) ) \sim [0,1].
\end{equation}
It represents the probability that for a given original system state $x$, its safety label $s_{\mathrm{sim}}(x)$ obtained with the nominal system is the same as the safety label $s_{\mathrm{real}}(x)$ obtained with the real system.
Then, according to Assumption~\ref{ass.safety}, if we observe an original system state $x$ that has the same safety property both in simulation and in reality, it is likely that other original system states that are close to the observed state will also show the same safety property. 

In order to predict the value of the function $P(x)$, we approximate it with a Gaussian process regression (GPR) model $P(x) = \mathrm{GP}(x)$.
For each original system state $x_\mathrm{real}$ contained in the feedback dataset $\mathcal{D}_{\mathrm{feedback}}$, we examine its safety label $s_{\mathrm{sim}}(x_{\mathrm{real}})$ in simulation.
This leads to a set of samples $\{P(x_{\mathrm{real}}^1),\ldots,P(x_{\mathrm{real}}^{k_f}) \}$ for the function $P(x)$, in which
\begin{equation}
P(x_{\mathrm{real}}^i) = \begin{cases} 
  1, & \text{if } s_{\mathrm{sim}}(x_{\mathrm{real}}^i) = s_{\mathrm{real}}(x_{\mathrm{real}}^i) \\
  0, & \text{if } s_{\mathrm{sim}}(x_{\mathrm{real}}^i) \neq s_{\mathrm{real}}(x_{\mathrm{real}}^i)
\end{cases}
\label{eq.sample_p}
\end{equation}
for $i = 1,\ldots,k_f$.
Hence the GPR model $\mathrm{GP}(x)$ is trained with the sets $\{x_{\mathrm{real}}^1, \ldots, x_{\mathrm{real}}^{k_f} \}$ and $\{P(x_{\mathrm{real}}^1),\ldots,P(x_{\mathrm{real}}^{k_f}) \}$, which are obtained from the current feedback dataset $\mathcal{D}_{\mathrm{feedback}}$. 

\begin{remark}
\label{remark.testing_in_reality}
If the real system is a real-world dynamical system, then it is usually difficult to test the corrective controller $K(x)$ with arbitrary initial original system states $x$ in reality, since there is a high risk of encountering unsafe behaviors. 
However in contrast, the simulation can be initialized with any original system state $x_\mathrm{real}$ contained in the feedback data, which then makes it possible to approximate the function $P(x)$.
\end{remark}

\begin{figure}[!t]
\centering
\includegraphics[width=0.7\linewidth]{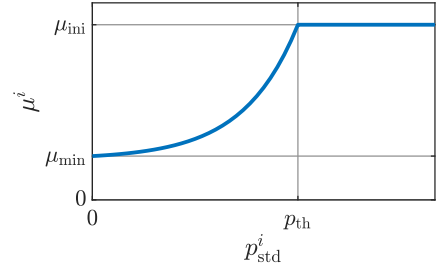}
\caption{
As given in \eqref{eq.bba_update_mu}, the subjective uncertainty $\mu^i$ in the BBA $B^i$ of the $i$-th training data is determined using the corresponding standard deviation $p_{\mathrm{std}}^i$ obtained from the GPR model $\mathrm{GP}(x)$.
}
\label{fig.update_mu}
\end{figure}

The trained GPR model $\mathrm{GP}(x)$ is then used to update the BBA $B^i$ of each training data.
The general motivation is that, we decrease the subjective uncertainty $\mu^i$ if we are confident about the reliability of this training data. 
Hence for the $i$-th training data, we compute a predicted mean value of the function $P(x_{\mathrm{sim}}^i)$, denoted as $p_{\mathrm{mean}}^i$, from the GPR model $\mathrm{GP}(x)$, along with a corresponding standard deviation $p_{\mathrm{std}}^i$ of the predicted value.  
Since a low value of the standard deviation $p_{\mathrm{std}}^i$ means we have observed enough feedback data to make a reliable prediction, we only update the BBA $B^i$ if the standard deviation $p_{\mathrm{std}}^i$ is smaller than a predefined threshold $p_{\mathrm{th}}$ 
\begin{equation}
B^i = \begin{cases} 
   (p_{\mathrm{mean}}^i(1- \mu^i), (1- p_{\mathrm{mean}}^i)(1-\mu^i), \mu^i), \\ \hspace{15mm} \text{if }p_{\mathrm{std}}^i \leq p_{\mathrm{th}} \text{ and } s_{\mathrm{sim}}(x_{\mathrm{sim}}^i) = 1 \\
   ((1- p_{\mathrm{mean}}^i)(1- \mu^i), p_{\mathrm{mean}}^i(1-\mu^i), \mu^i), \\ \hspace{15mm} \text{if }p_{\mathrm{std}}^i \leq p_{\mathrm{th}} \text{ and } s_{\mathrm{sim}}(x_{\mathrm{sim}}^i) = 0
\end{cases}
\label{eq.bba_update}
\end{equation}
with the new subjective uncertainty $\mu^i$ calculated as
\begin{equation}
\mu^i =   \frac{\mu_{\mathrm{ini}} - \mu_{\mathrm{min}}}{\alpha^{p_{\mathrm{th}}} - 1} (\alpha^{p_{\mathrm{std}}^i} - 1) + \mu_{\mathrm{min}} 
\label{eq.bba_update_mu}
\end{equation}
where $\mu_{\mathrm{ini}}$ is the same initial subjective uncertainty as that given in \eqref{eq.bba_ini} (see Fig.~\ref{fig.update_mu} for a graphical representation of \eqref{eq.bba_update_mu}).
BBAs $B^i$ with $p_{\mathrm{std}}^i > p_{\mathrm{th}}$ remain unchanged, as in (\ref{eq.bba_ini}). 
Such an update of the BBA $B^i$ considers the predicted value of the function $P(x_{\mathrm{sim}}^i)$ and the reliability of this prediction at the same time.

(\ref{eq.bba_update_mu}) is designed by considering two aspects: first, the subjective uncertainty $\mu^i$ is set equal to $\mu_{\mathrm{ini}}$ when $p_{\mathrm{std}}^i \geq p_{\mathrm{th}}$. 
This means that in this case we do not have the confidence to update the BBA $B^i$, as not enough information is observed from the feedback data;
second, due to the inevitable reality gap, the subjective uncertainty $\mu^i$ maintains a minimum uncertainty $\mu_{\mathrm{min}}$ even when the standard deviation $p_{\mathrm{std}}^i$ is $0$.
We use the exponential form such that the decrease in $\mu^i$ is faster when the standard deviation $p_{\mathrm{std}}^i$ is near the threshold $p_{\mathrm{th}}$.
The parameter $\alpha >1$ determines the decay rate and is selected by considering the actual learning task.

Note that for the same training data, the relationship between the standard deviation $p_{\mathrm{std}}^i$ and the threshold $p_{\mathrm{th}}$ can change during the learning process.
For example, we might obtain $p_{\mathrm{std}}^i \leq p_{\mathrm{th}}$ in the current update iteration, but in the next update iteration it changes to $p_{\mathrm{std}}^i > p_{\mathrm{th}}$.
This happens primarily when we first observe a safe original system state but followed by a nearby unsafe state, such that the safety of the states in between these two observed states becomes uncertain.
In such cases, we set the BBA $B^i$ back to the initial BBA given in (\ref{eq.bba_ini}).

Once the BBAs $B^i$ of all training data have been updated with the up-to-date feedback dataset $\mathcal{D}_{\mathrm{feedback}}$, the prior estimate $B_v^{\mathrm{prior}}$ for each index vector $v$ is recomputed using (\ref{eq.bba_fusion}).
This results in an updated prior DSAF $\Gamma_{\mathrm{d}}^{\mathrm{prior}}(v)$, which is used later for revising the DSAF $\Gamma_{\mathrm{d}}(v)$.

\subsection{Feedback DSAF}
\label{sec.update_feedback}
The feedback data contain the information about the real safety properties of different original system states $x$.
To fully utilize this valuable information, we construct an additional DSAF, denoted as the feedback DSAF $\Gamma_{\mathrm{d}}^{\mathrm{feedback}}(v)$, using the feedback dataset $\mathcal{D}_{\mathrm{feedback}}$.

As the amount of data is insufficient, we also consider the estimate obtained from the feedback data as a subjective probability~\cite{zhou2020general}.
Then, as with the prior estimate $B_v^{\mathrm{prior}}$, we formulate another estimate of the BBA $B_v$ for each index vector $v$ as
\begin{equation}
B_v^{\mathrm{feedback}} = (b_{\mathrm{safe}}^{v,\mathrm{feedback}}, b_{\mathrm{unsafe}}^{v,\mathrm{feedback}}, \mu^{v,\mathrm{feedback}})
\label{eq.bba_feedback}
\end{equation}
which is referred to as the feedback estimate $B_v^{\mathrm{feedback}}$.

For each index vector $v$, the feedback estimate $B_v^{\mathrm{feedback}}$ is determined by the number of safe and unsafe feedback data that correspond to this grid cell.
By sorting the feedback dataset $\mathcal{D}_{\mathrm{feedback}}$ with the locating function $L(x)$, we denote the number of safe feedback data that have the index vector $v$ from the locating function, i.e., $L(x_{\mathrm{real}}) = v$ and $s_{\mathrm{real}}(x_{\mathrm{real}}) = 1$, as $k_{\mathrm{safe}}^v$ (and $k_{\mathrm{unsafe}}^v$ for the number of unsafe feedback data).
If at least one feedback data is available for the index vector $v$, i.e., $k_{\mathrm{safe}}^v + k_{\mathrm{unsafe}}^v \geq 1$, we compute the feedback estimate $B_v^{\mathrm{feedback}}$ as follows
\begin{eqnarray}
    && b_{\mathrm{safe}}^{v,\mathrm{feedback}} = \frac{k_{\mathrm{safe}}^v}{k_{\mathrm{safe}}^v + k_{\mathrm{unsafe}}^v} (1 - \mu^{v,\mathrm{feedback}}) 
\label{eq.bba_real_safe} \\
    && b_{\mathrm{unsafe}}^{v,\mathrm{feedback}} = \frac{k_{\mathrm{unsafe}}^v}{k_{\mathrm{safe}}^v + k_{\mathrm{unsafe}}^v}(1 - \mu^{v,\mathrm{feedback}}) 
\label{eq.bba_real_unsafe} \\
    && \mu^{v,\mathrm{feedback}}  = \beta \mathrm{exp}(-\gamma(k_{\mathrm{safe}}^v + k_{\mathrm{unsafe}}^v - 1)).
\label{eq.bba_real_mu}
\end{eqnarray}
The subjective uncertainty $\mu^{v,\mathrm{feedback}}$ decreases if more feedback data are observed for the index vector $v$.
It satisfies that, if a sufficient number of feedback data is obtained, the subjective uncertainty $\mu^{v,\mathrm{feedback}}$ approaches 0.
In such a case, the belief masses $b_{\mathrm{safe}}^{v,\mathrm{feedback}}$ and $b_{\mathrm{unsafe}}^{v,\mathrm{feedback}}$ can be considered as the actual probabilities.
The parameters $\beta$ and $\gamma$ define the initial value and
the decay rate of the subjective uncertainty $\mu^{v,\mathrm{feedback}}$, respectively.
If no feedback data is observed for the index vector $v$, we set the feedback estimate $B_v^{\mathrm{feedback}}$ to an empty BBA $B_{\varnothing}$ defined as $B_v^{\mathrm{feedback}} = B_{\varnothing} = (0,0,1)$, which indicates that no safety estimate can be made.

Using the feedback estimate $B_v^{\mathrm{feedback}}$, we obtain the following feedback DSAF $\Gamma_{\mathrm{d}}^{\mathrm{feedback}}(v)$
\begin{equation}
\Gamma_{\mathrm{d}}^{\mathrm{feedback}}(v) = b_\mathrm{safe}^{v,\mathrm{feedback}}
\label{eq.feedback_mapping}
\end{equation}
which represents the estimate of the DSAF $\Gamma_{\mathrm{d}}(v)$ derived from the feedback data only.
In the next subsection, we fuse the feedback DSAF $\Gamma_{\mathrm{d}}^{\mathrm{feedback}}(v)$ with the updated prior DSAF $\Gamma_{\mathrm{d}}^{\mathrm{prior}}(v)$ to derive a more accurate DSAF $\Gamma_{\mathrm{d}}(v)$.

\subsection{Fusion of Prior and Feedback DSAFs}
\label{sec.update_safety}
The prior and feedback DSAFs both provide beliefs about safety by using different datasets as their belief source.
To update the DSAF $\Gamma_{\mathrm{d}}(v)$, we fuse these two functions using weighted belief fusion as given in (\ref{eq.wbf_safe}-\ref{eq.wbf_uncertainty}).
This leads to a fused estimate $B_v^{\mathrm{fuse}}$ for each index vector $v$
\begin{equation}
B_v^{\mathrm{fuse}} = (b_{\mathrm{safe}}^{v,\mathrm{fuse}}, b_{\mathrm{unsafe}}^{v,\mathrm{fuse}}, \mu^{v,\mathrm{fuse}}) 
\label{eq.bba_combine}
\end{equation}
which is computed as
\begin{equation}
   B_v^{\mathrm{fuse}} = \begin{cases} 
   \mathrm{F}( \{B_v^{\mathrm{prior}}, B_v^{\mathrm{feedback}} \}), & \text{if } B_v^{\mathrm{feedback}} \neq B_{\varnothing} \\
   B_v^{\mathrm{prior}}, & \text{if } B_v^{\mathrm{feedback}} = B_{\varnothing}.
\label{eq.bba_combine_single}
\end{cases}
\end{equation}
If the feedback estimate $B_v^{\mathrm{feedback}}$ is non-empty, we find the fused estimate $B_v^{\mathrm{fuse}}$ through weighted belief fusion $\mathrm{F}(\cdot)$ of the set $\{B_v^{\mathrm{prior}}, B_v^{\mathrm{feedback}} \}$.
Otherwise, we set the fused estimate $B_v^{\mathrm{fuse}}$ equal to the prior estimate $B_v^{\mathrm{prior}}$.

The fused estimate $B_v^{\mathrm{fuse}}$ fulfills the following property, which is also given in~\cite{zhou2020general}.

\newtheorem{proposition}{Proposition}
\begin{proposition}
\label{prop.convergence} 
If the number of feedback data approaches infinity, the fused estimate $B_v^{\mathrm{fuse}}$ becomes the actual probabilities, and the prior estimate $B_v^{\mathrm{prior}}$ has no effect in making safety estimates.
\end{proposition}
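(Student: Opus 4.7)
The plan is to take the limit $k_f^v := k_{\mathrm{safe}}^v + k_{\mathrm{unsafe}}^v \to \infty$ in the formulas defining $B_v^{\mathrm{feedback}}$ and then substitute the resulting limit into the weighted belief fusion operator of \eqref{eq.bba_combine_single} to see what remains.

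First, I would handle the feedback estimate in isolation. Since $k_f^v \to \infty$ implies that the case $B_v^{\mathrm{feedback}} = B_{\varnothing}$ no longer occurs, the upper branch of \eqref{eq.bba_combine_single} applies. From \eqref{eq.bba_real_mu}, $\mu^{v,\mathrm{feedback}} = \beta \exp(-\gamma(k_f^v - 1)) \to 0$. Interpreting the samples as i.i.d.\ Bernoulli trials indicating safety in the grid cell $v$, the strong law of large numbers gives $k_{\mathrm{safe}}^v / k_f^v \to p_v^\ast$ almost surely, where $p_v^\ast$ denotes the actual probability that $x$ is safe given $L(x) = v$. Combined with \eqref{eq.bba_real_safe}--\eqref{eq.bba_real_unsafe}, this yields
\begin{equation}
B_v^{\mathrm{feedback}} \;\longrightarrow\; (p_v^\ast,\,1-p_v^\ast,\,0).
\label{eq.plan_limit_feedback}
\end{equation}

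Next, I would plug this into the WBF formulas \eqref{eq.wbf_safe}--\eqref{eq.wbf_uncertainty} specialized to the two-element set $\{B_v^{\mathrm{prior}}, B_v^{\mathrm{feedback}}\}$. Writing $(b_s^p,b_u^p,\mu^p)$ for the prior and $(b_s^f,b_u^f,\mu^f)$ for the feedback, the fused safe mass is
\begin{equation}
b_{\mathrm{safe}}^{v,\mathrm{fuse}} \;=\; \frac{b_s^p(1-\mu^p)\mu^f + b_s^f(1-\mu^f)\mu^p}{\mu^p + \mu^f - 2\mu^p\mu^f},
\label{eq.plan_wbf2}
\end{equation}
with analogous expressions for $b_{\mathrm{unsafe}}^{v,\mathrm{fuse}}$ and $\mu^{v,\mathrm{fuse}}$. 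Taking $\mu^f \to 0$ with $b_s^f \to p_v^\ast$ and $b_u^f \to 1-p_v^\ast$, the first term in the numerator of \eqref{eq.plan_wbf2} vanishes while the second tends to $p_v^\ast \mu^p$; the denominator tends to $\mu^p$. Since the construction of $B_v^{\mathrm{prior}}$ through \eqref{eq.bba_fusion}--\eqref{eq.wbf_uncertainty} keeps $\mu^p > 0$ (the training BBAs carry $\mu_{\mathrm{ini}}>0$ or the fallback $B_{\mathrm{ini}}$ does), the ratio is well defined and collapses to $p_v^\ast$. The same calculation gives $b_{\mathrm{unsafe}}^{v,\mathrm{fuse}} \to 1-p_v^\ast$ and $\mu^{v,\mathrm{fuse}} \to 0$.

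Finally, the critical observation for the second half of the claim is that the prior quantities $b_s^p$, $b_u^p$ enter the limit only through the first numerator term, which is killed by $\mu^f$; the surviving ratio $p_v^\ast \mu^p / \mu^p$ is independent of the prior. Hence the fused estimate converges to $(p_v^\ast,\,1-p_v^\ast,\,0)$ regardless of $B_v^{\mathrm{prior}}$, which is exactly the claim. The only step that deserves care is the well-definedness of the limit: one must justify that $\mu^p$ stays bounded away from $0$ so that division by zero cannot occur, which follows from the construction rules in Section~\ref{sec.initial_mapping}, and that the frequency interpretation of $k_{\mathrm{safe}}^v/k_f^v \to p_v^\ast$ is admissible under the SRL data-collection procedure.
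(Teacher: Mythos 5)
Your proposal is correct and follows essentially the same route as the paper: the paper's proof simply states that specializing the weighted belief fusion \eqref{eq.wbf_safe}--\eqref{eq.wbf_uncertainty} to the two-element set $\{B_v^{\mathrm{prior}}, B_v^{\mathrm{feedback}}\}$ and letting $\mu^{v,\mathrm{feedback}} \to 0$ (via \eqref{eq.bba_real_mu}) yields $b_{\mathrm{safe}}^{v,\mathrm{fuse}} \to b_{\mathrm{safe}}^{v,\mathrm{feedback}}$, $b_{\mathrm{unsafe}}^{v,\mathrm{fuse}} \to b_{\mathrm{unsafe}}^{v,\mathrm{feedback}}$ and $\mu^{v,\mathrm{fuse}} \to 0$, which is exactly your limit computation on \eqref{eq.plan_wbf2}. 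You are somewhat more careful than the paper in two respects that are worth keeping: the explicit law-of-large-numbers identification of the limiting feedback masses with the actual probability $p_v^\ast$ (the paper only asserts this informally after \eqref{eq.bba_real_mu}), and the check that $\mu^{v,\mathrm{prior}} > 0$ so the fused ratio is well defined.
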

\begin{proof}
Proposition~\ref{prop.convergence} is justified by the following equations
\begin{eqnarray}
    && \lim_{k_{\mathrm{safe}}^v + k_{\mathrm{unsafe}}^v \rightarrow \infty} b_{\mathrm{safe}}^{v,\mathrm{fuse}}    = b_{\mathrm{safe}}^{v,\mathrm{feedback}}  \\ 
    && \lim_{k_{\mathrm{safe}}^v + k_{\mathrm{unsafe}}^v \rightarrow \infty} b_{\mathrm{unsafe}}^{v,\mathrm{fuse}} = b_{\mathrm{unsafe}}^{v,\mathrm{feedback}}  \\
    && \lim_{k_{\mathrm{safe}}^v + k_{\mathrm{unsafe}}^v \rightarrow \infty} \mu^{v,\mathrm{fuse}} = \mu^{v,\mathrm{feedback}} =0
    \label{Eq.relationship}
\end{eqnarray}
which are obtained by simplifying (\ref{eq.wbf_safe}-\ref{eq.wbf_uncertainty}) with the set $\{B_v^{\mathrm{prior}}, B_v^{\mathrm{feedback}} \}$.
\end{proof}

Considering computational efficiency, the update of the DSAF $\Gamma_{\mathrm{d}}(v)$ is generally performed once when every $k_u$ feedback data is obtained, where the value of $k_u$ is selected according to the actual learning task. 
In each update iteration (indexed by number $N$, see Section~\ref{sec.results_update}), we first use the up-to-date feedback dataset $\mathcal{D}_{\mathrm{feedback}}$ to update the prior DSAF $\Gamma_{\mathrm{d}}^{\mathrm{prior}}(v)$ and to construct the feedback DSAF $\Gamma_{\mathrm{d}}^{\mathrm{feedback}}(v)$. 
Then, the fused estimate $B_v^{\mathrm{fuse}}$ is computed from these two functions for each index vector $v$.
The updated DSAF $\Gamma_{\mathrm{d}}(v)$ is thus obtained using the fused estimate $B_v^{\mathrm{fuse}}$ as
\begin{equation}
\Gamma_{\mathrm{d}}(v) = b_\mathrm{safe}^{v,\mathrm{fuse}}
\label{eq.safety_mapping}
\end{equation}
which also gives the latest low-dimensional representation of the safe region $\mathcal{S}_y$ according to (\ref{eq.map_m}).
With further feedback data, the DSAF $\Gamma_{\mathrm{d}}(v)$ becomes more accurate and more reliable safety estimates are obtained.

\section{Quadcopter Experiments}
\label{sec.results}
In this section, we demonstrate the proposed approach for identifying the low-dimensional representation of the safe region $\mathcal{S}_y$, using the example of a quadcopter. 

\subsection{Experimental Setup}

\label{sec.results_setup}
\begin{figure}[!t]
\centering
\includegraphics[width=.9\linewidth]{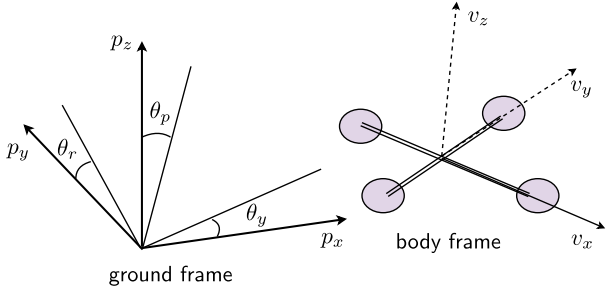}
\caption{The system state $x$ of a quadcopter is defined using the ground frame and the body frame.}
\label{fig.quadcopter}
\end{figure}

We simulate the quadcopter using the system dynamics given in~\cite{luukkonen2011modelling} with MATLAB Simulink\footnote{\url{https://www.mathworks.com/products/simulink.html}} (Version R2019b) running on a laptop powered by an Intel i7-7700HQ CPU.
The 12-dimensional system state is defined as $x = [p_g, \theta_g, v_b, \omega_b]^T$, where $p_g = [p_x, p_y, p_z]^T$ and $\theta_g = [\theta_r, \theta_p, \theta_y]^T$ are the linear and angular positions defined in the ground frame, $v_b = [v_x, v_y, v_z]^T$ and $\omega_b = [\omega_r, \omega_p, \omega_y]^T$ are the linear and angular velocities defined in the body frame (see Fig.~\ref{fig.quadcopter}).
The control input $u$ consists of the four motor speeds of the quadcopter.
For the nominal system model, we set the mass of the quadcopter to $m = 1 \text{ kg}$ and the maximal lifting force to $f = 200 \text{ N}$.
The safety of a given state $x$ is determined by simulating the controlled dynamics with the corrective control $K(x)$ that starts in initial state $x$, and checking if the controller is able to successfully drive the quadcopter back to a hovering state without crashing.
In this example, we use the PID controller given in~\cite{luukkonen2011modelling} as the corrective controller $K(x)$.
It stabilizes the quadcopter's height as well as its roll, pitch and yaw rotations.
The coefficients of the PID controller are: $K_{P,h} = 1.5$, $K_{I,h} = 0$, $K_{D,h} = 2.5$ for the height control, and $K_{P,r} = K_{P,p} = K_{P,y} = 6$, $K_{I,r} = K_{I,p} = K_{I,y} = 0$, $K_{D,r} = K_{D,p} = K_{D,y} = 1.75$ for the roll, pitch and yaw rotations control, respectively.

To generate the training dataset $\mathcal{D}_{\mathrm{train}}$, we first create $k_t = 10000$ original system states $x$. 
We set $p_x = p_y = 0$ and $p_z = 2 \text{ m}$ to leave enough space and time for the corrective controller $K(x)$.
All other variables are sampled with a uniform distribution within the following range:
$ 0 \leq \theta_r, \theta_p, \theta_y \leq 2\pi \text{ rad}$, $-3 \text{ m/s} \leq v_x, v_y, v_z \leq 3 \text{ m/s}$, $-10 \text{ rad/s} \leq \omega_r, \omega_p, \omega_y \leq 10 \text{ rad/s}$.
The training dataset $\mathcal{D}_{\mathrm{train}}$ is then obtained by examining the performance of the corrective controller $K(x)$ for all these initial values.

\subsection{Identifying the Low-dimensional Representation of the Safe Region}
\label{sec.results_identify}

\begin{figure}[!t]
\centering
\subfloat[]{\includegraphics[width=.7\linewidth]{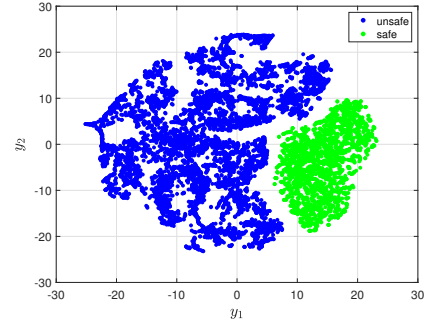}%
\label{fig.results_tSNE}}
\hfil
\subfloat[]{\includegraphics[width=.7\linewidth]{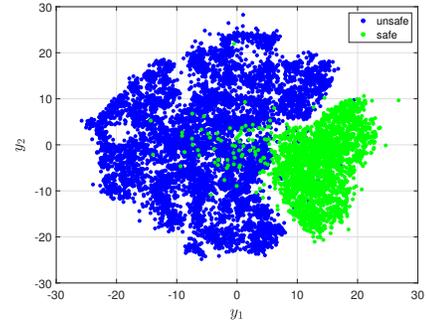}%
\label{fig.results_mapping_NN}}
\caption{(a) The initial realization of simplified states $y^1,\ldots, y^{k_t}$ obtained from t-SNE. The safe and unsafe training data are denoted by green and blue points, respectively. (b) The final realization of simplified states $y^1,\ldots, y^{k_t}$ obtained by recomputing with the learned neural network that represents the state mapping $y = \Psi(x) = \mathrm{NN}(x)$.}
\label{fig.results_learning}
\end{figure}

\begin{figure*}[!t]
\centering
\subfloat[]{\includegraphics[width=1.7in]{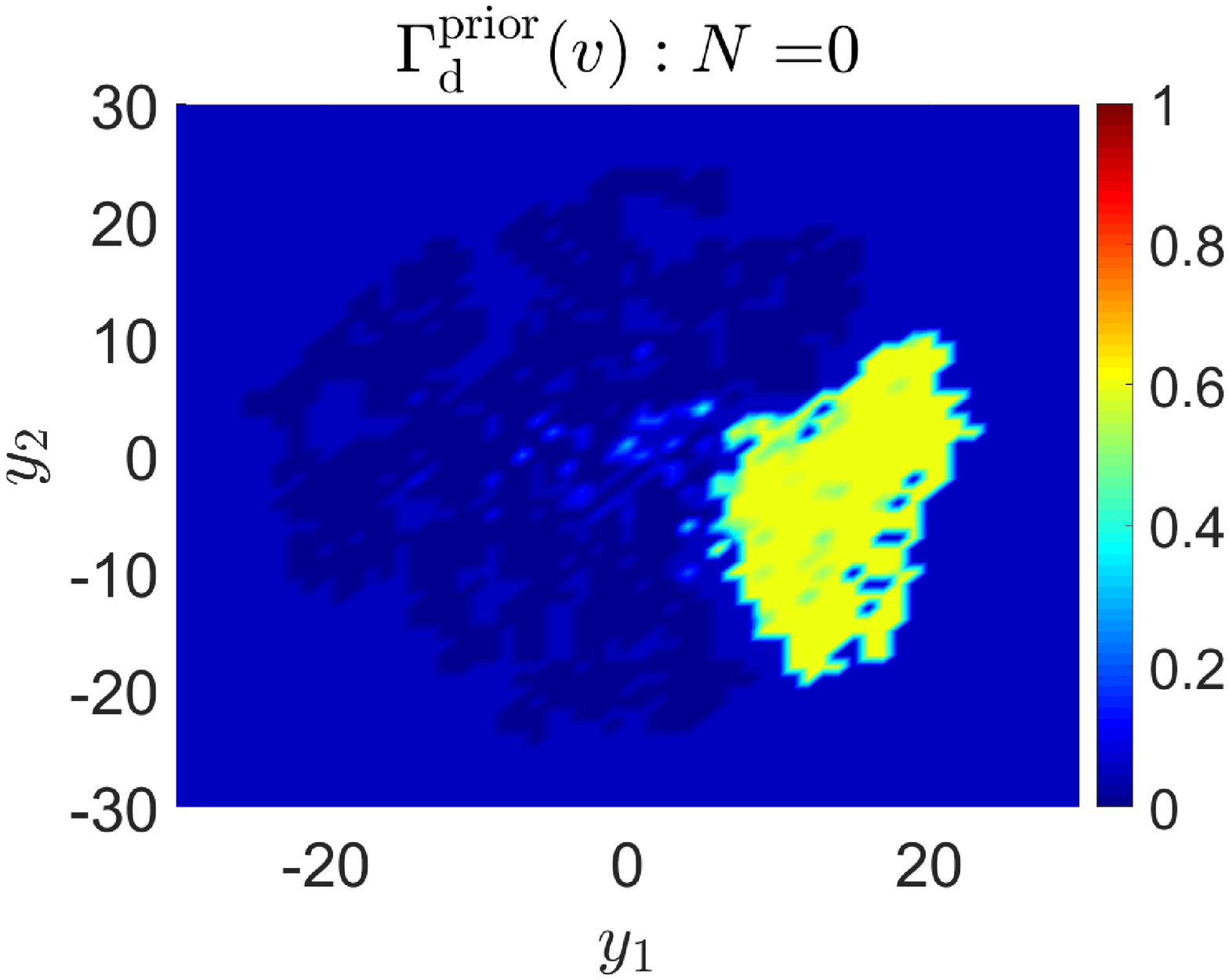}%
\label{fig.sim_0}}
\hfil
\subfloat[]{\includegraphics[width=1.7in]{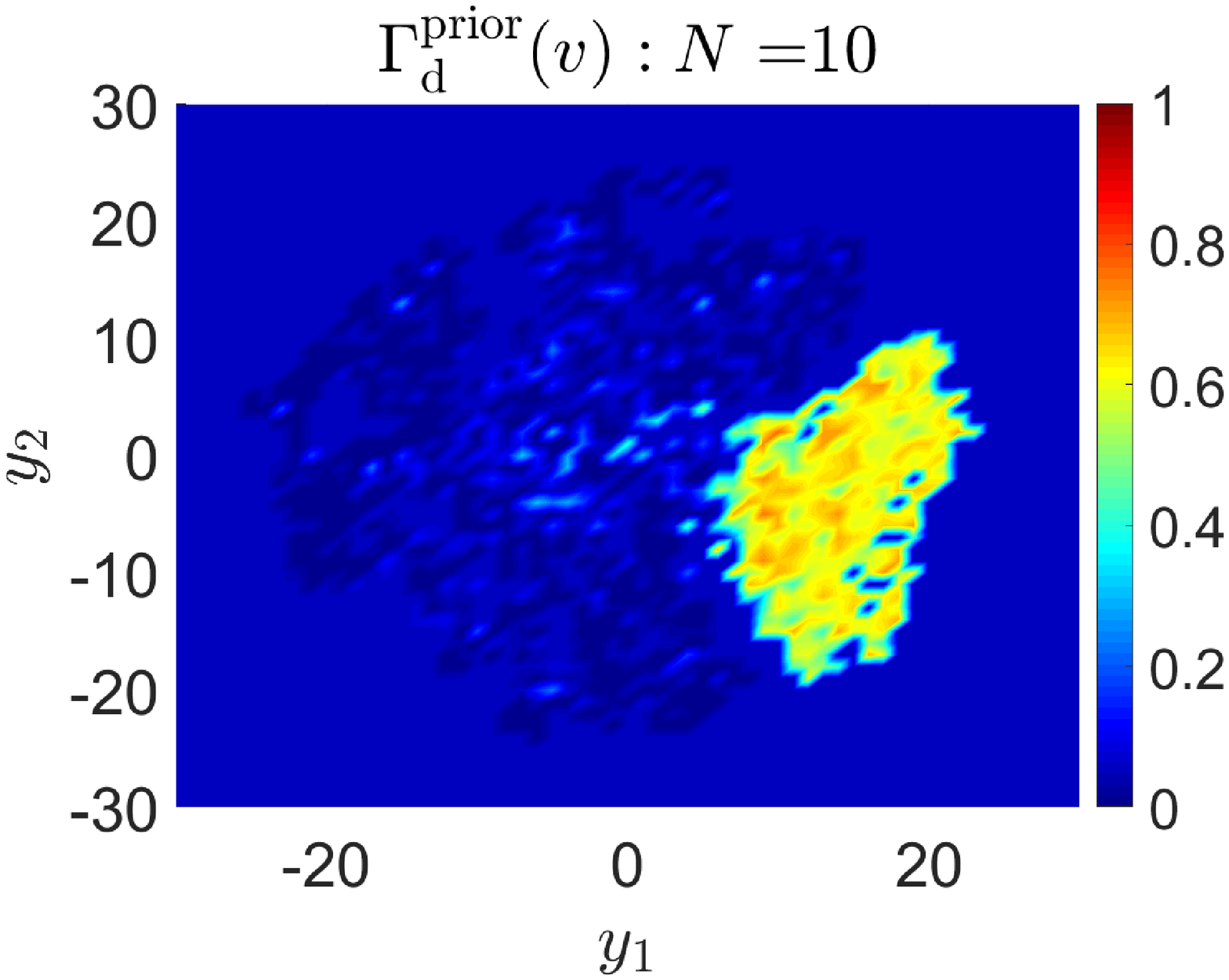}%
\label{fig.sim_10}}
\hfil
\subfloat[]{\includegraphics[width=1.7in]{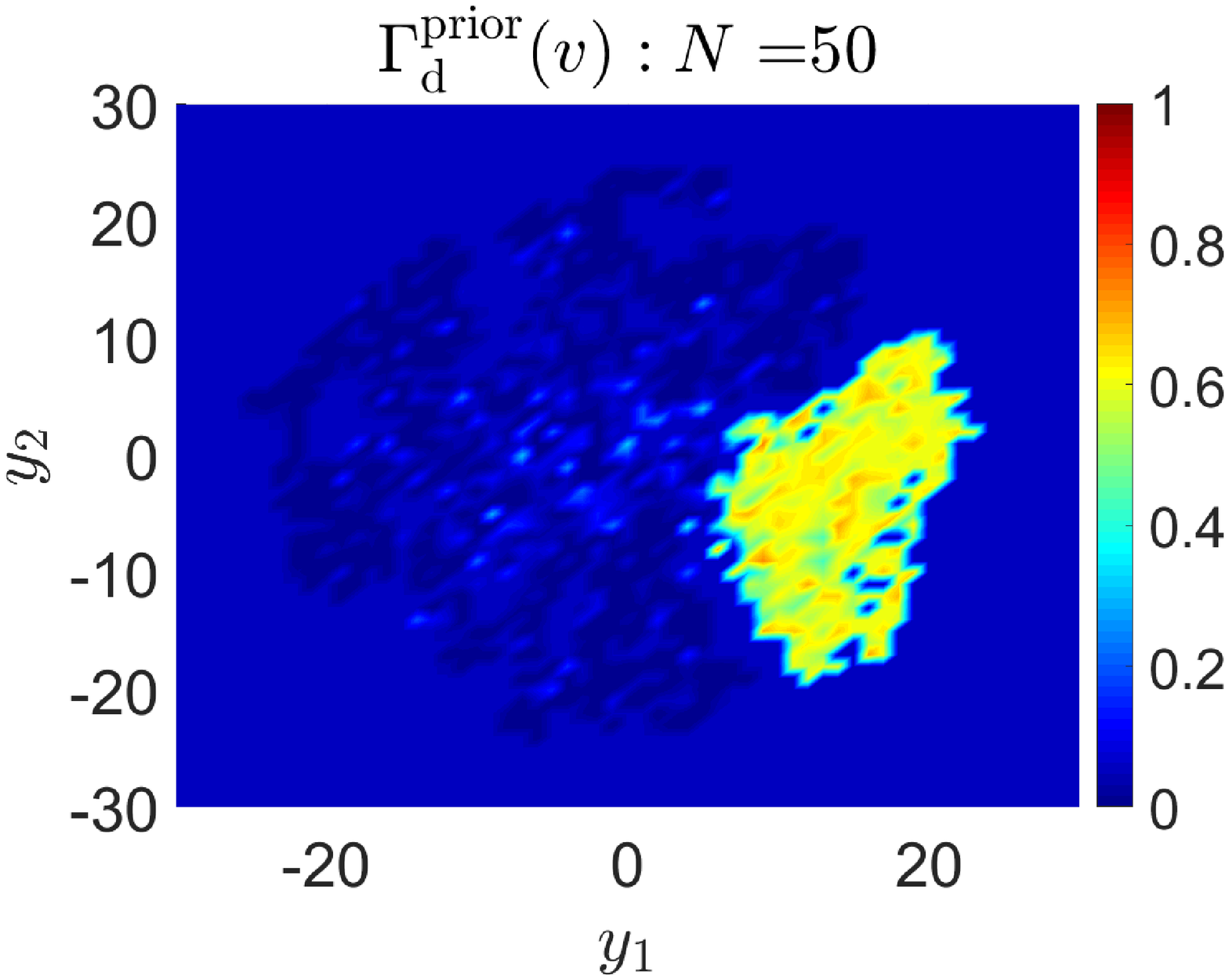}%
\label{fig.sim_50}}
\hfil
\subfloat[]{\includegraphics[width=1.7in]{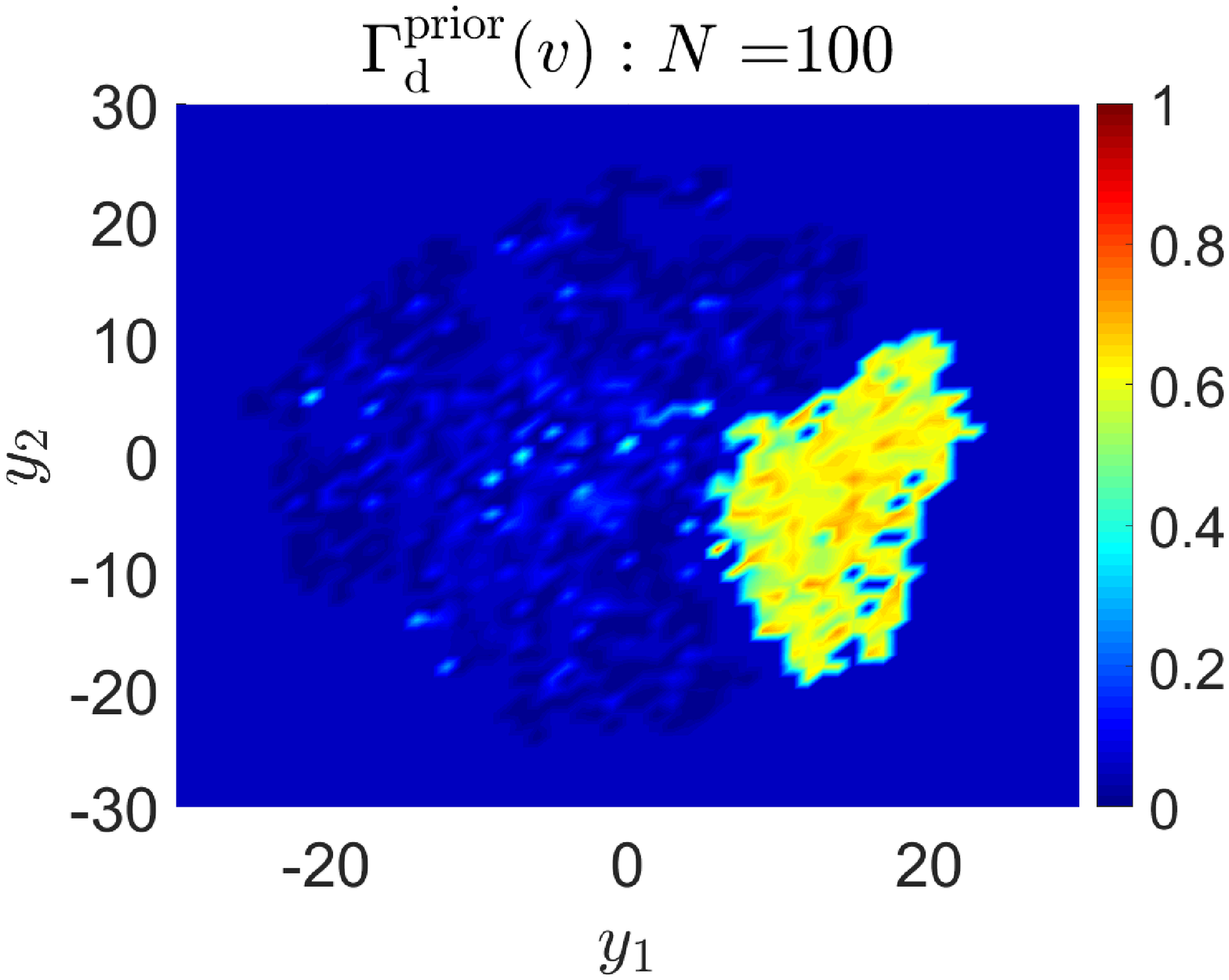}%
\label{fig.sim_100}}
\hfil
\subfloat[]{\includegraphics[width=1.7in]{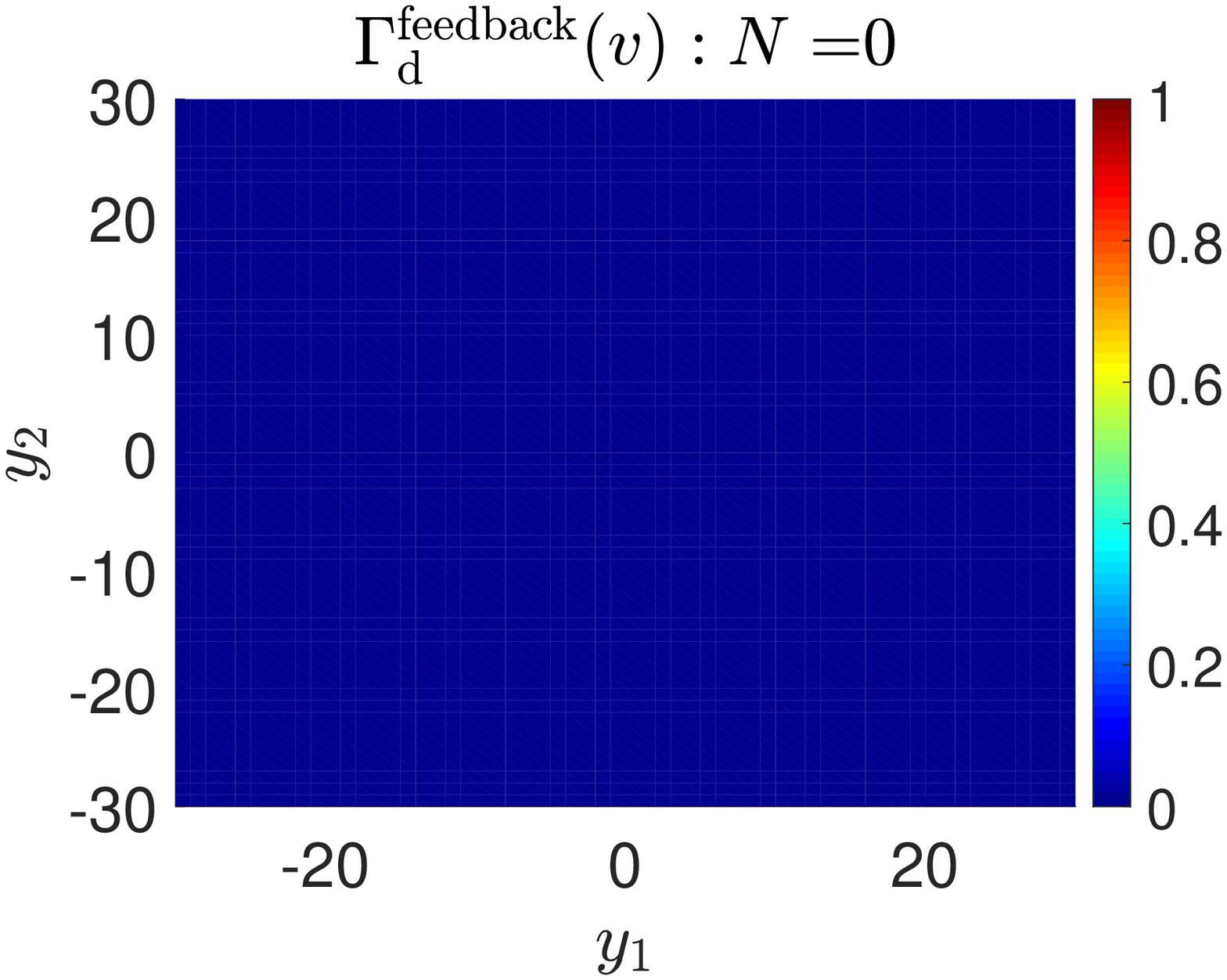}%
\label{fig.real_0}}
\hfil
\subfloat[]{\includegraphics[width=1.7in]{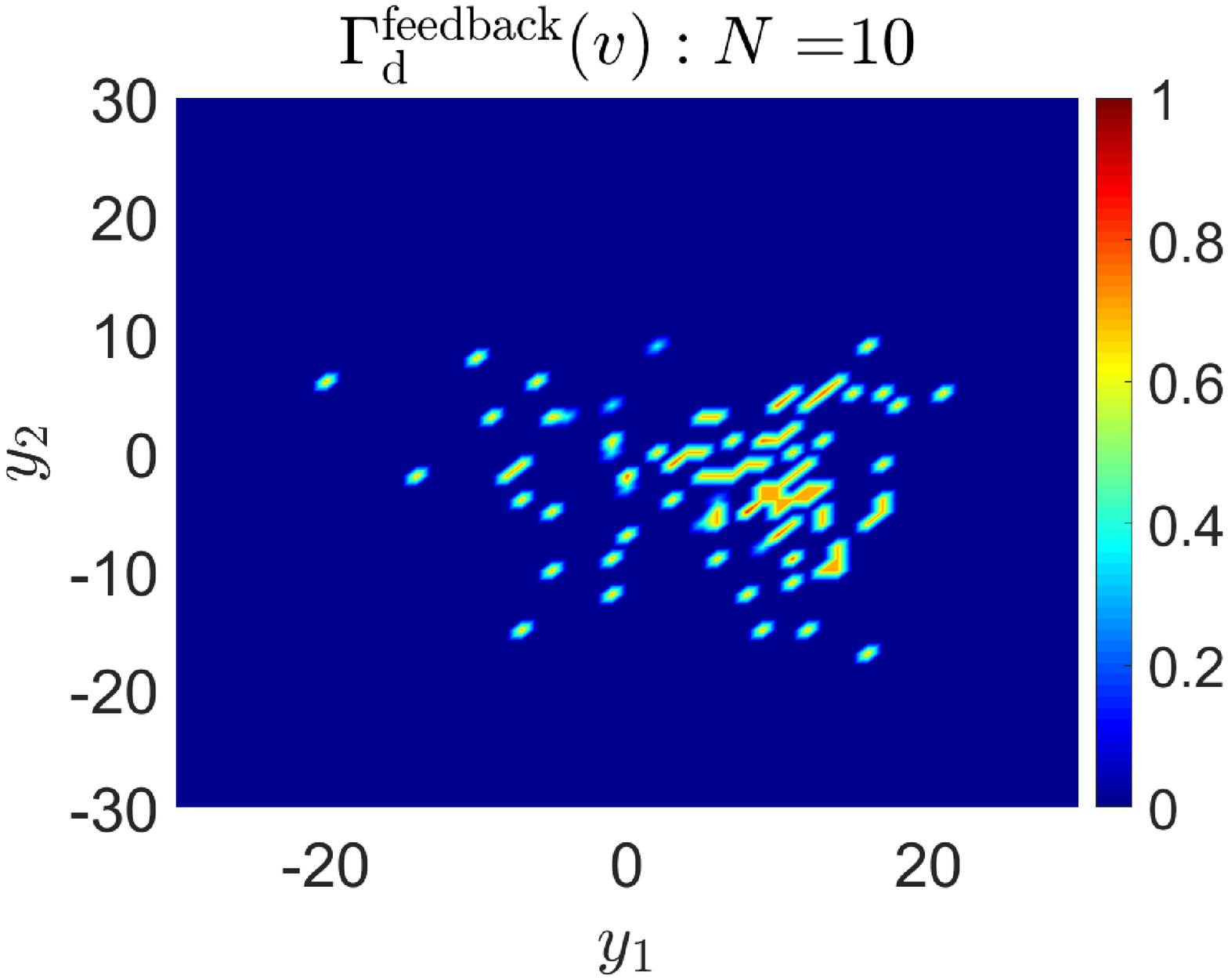}%
\label{fig.real_10}}
\hfil
\subfloat[]{\includegraphics[width=1.7in]{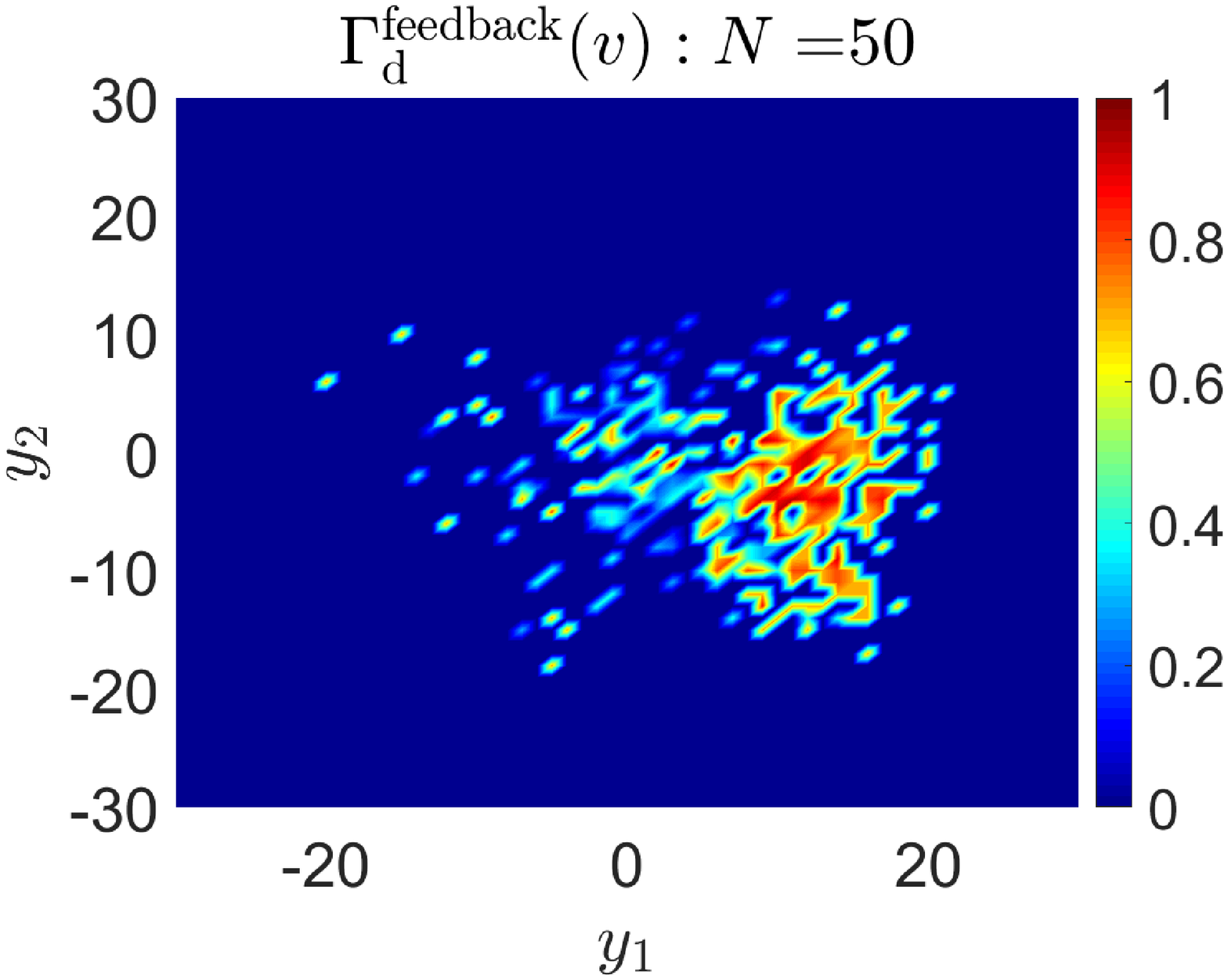}%
\label{fig.real_50}}
\hfil
\subfloat[]{\includegraphics[width=1.7in]{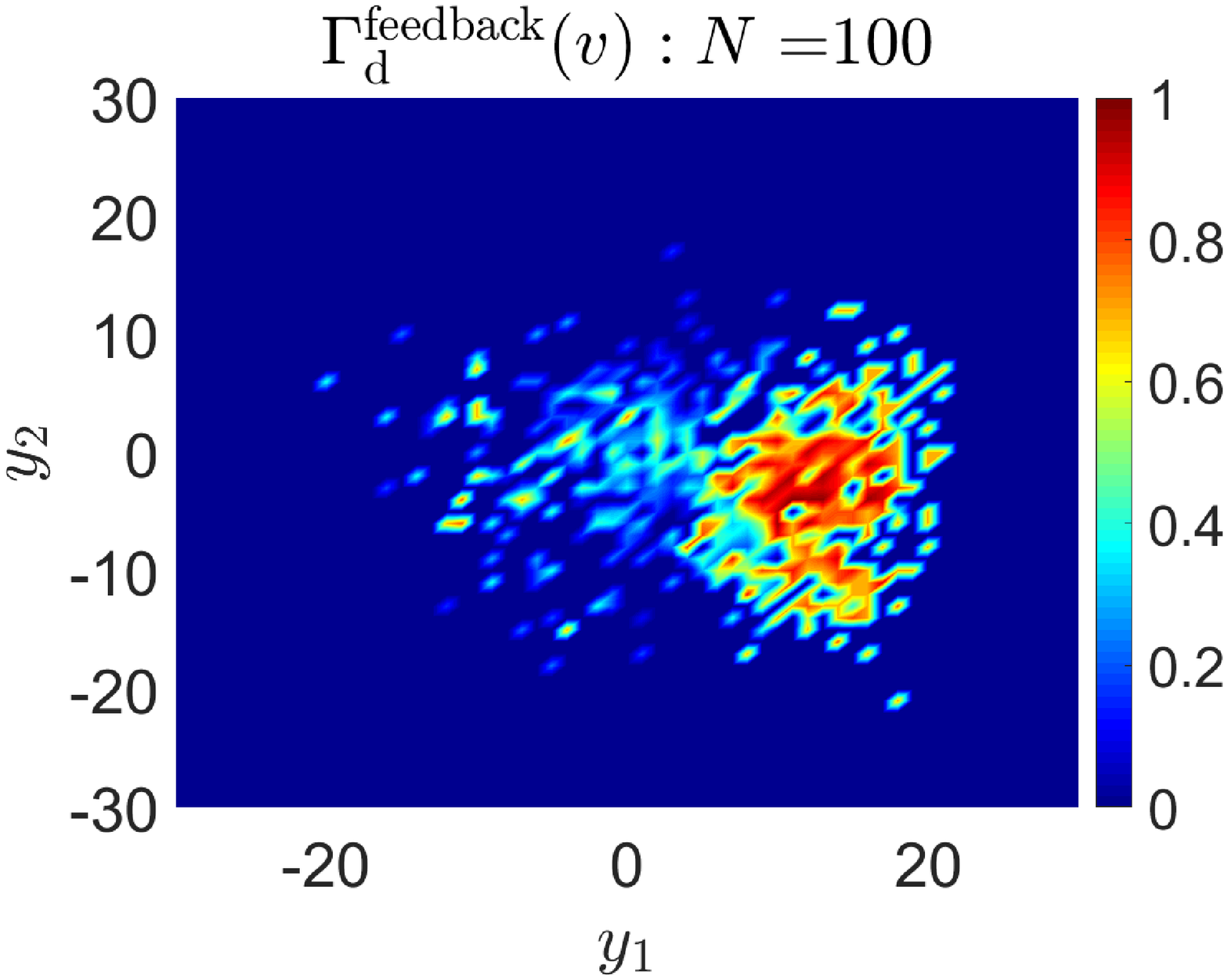}%
\label{fig.real_100}}
\hfil
\subfloat[]{\includegraphics[width=1.7in]{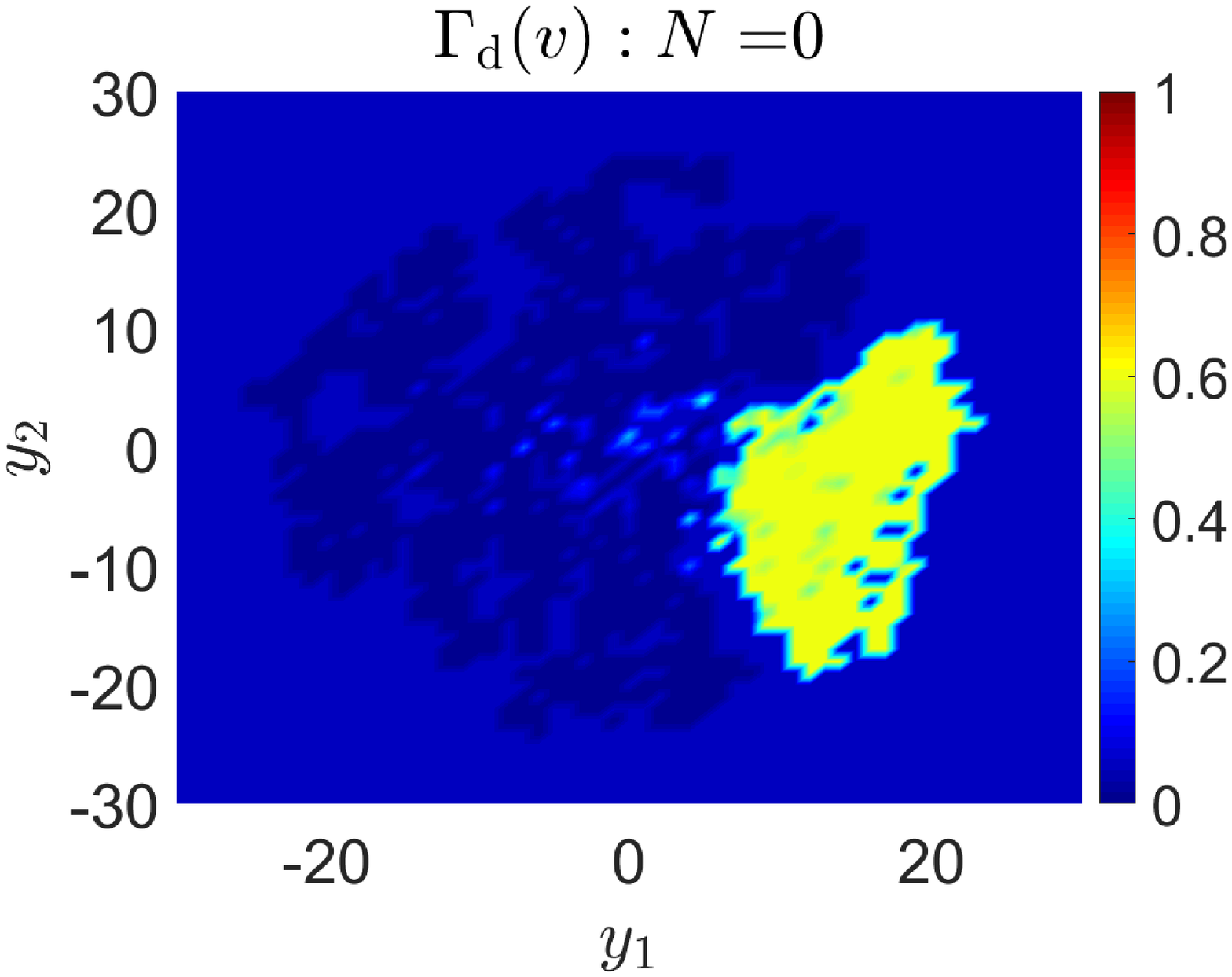}%
\label{fig.combine_0}}
\hfil
\subfloat[]{\includegraphics[width=1.7in]{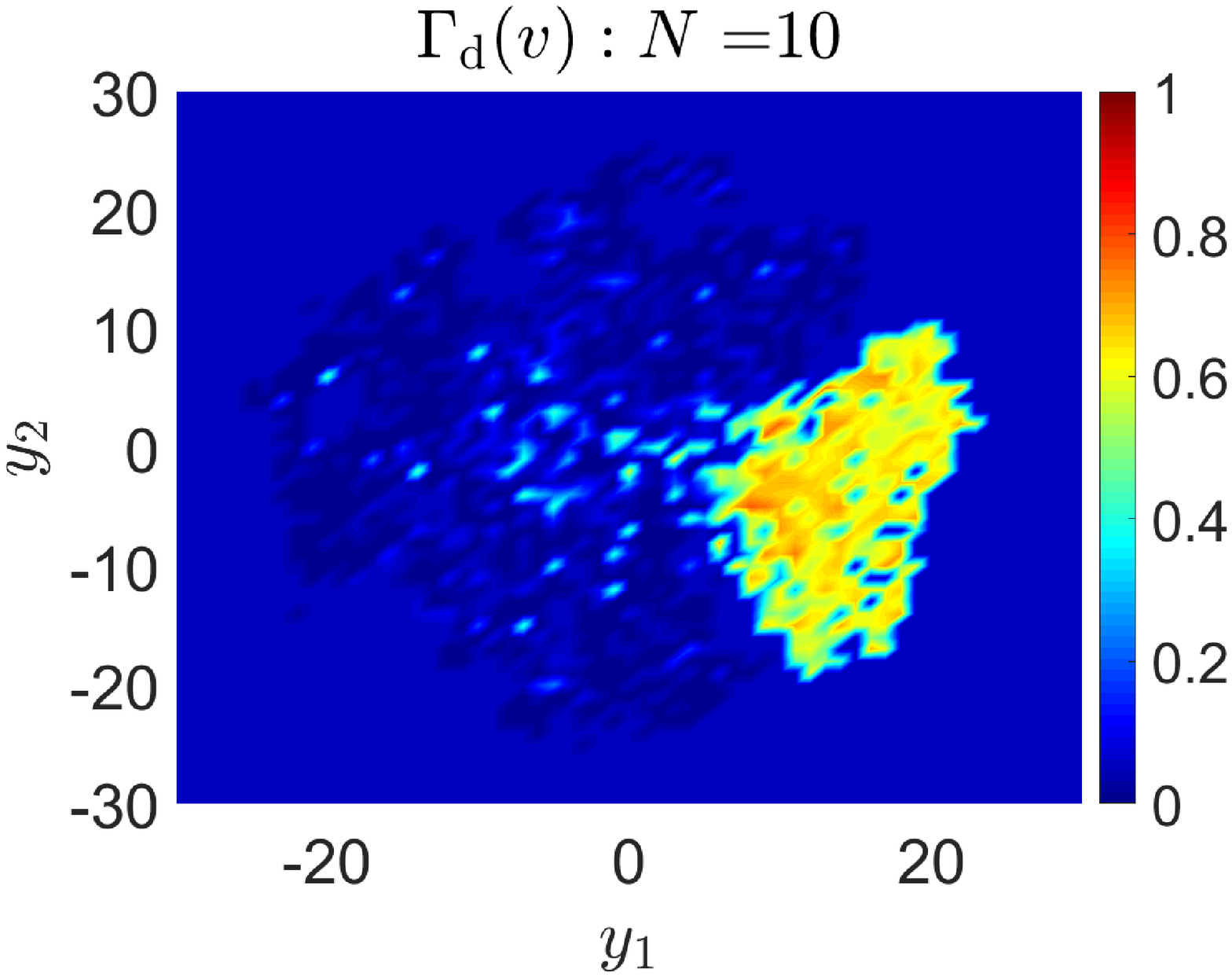}%
\label{fig.combine_10}}
\hfil
\subfloat[]{\includegraphics[width=1.7in]{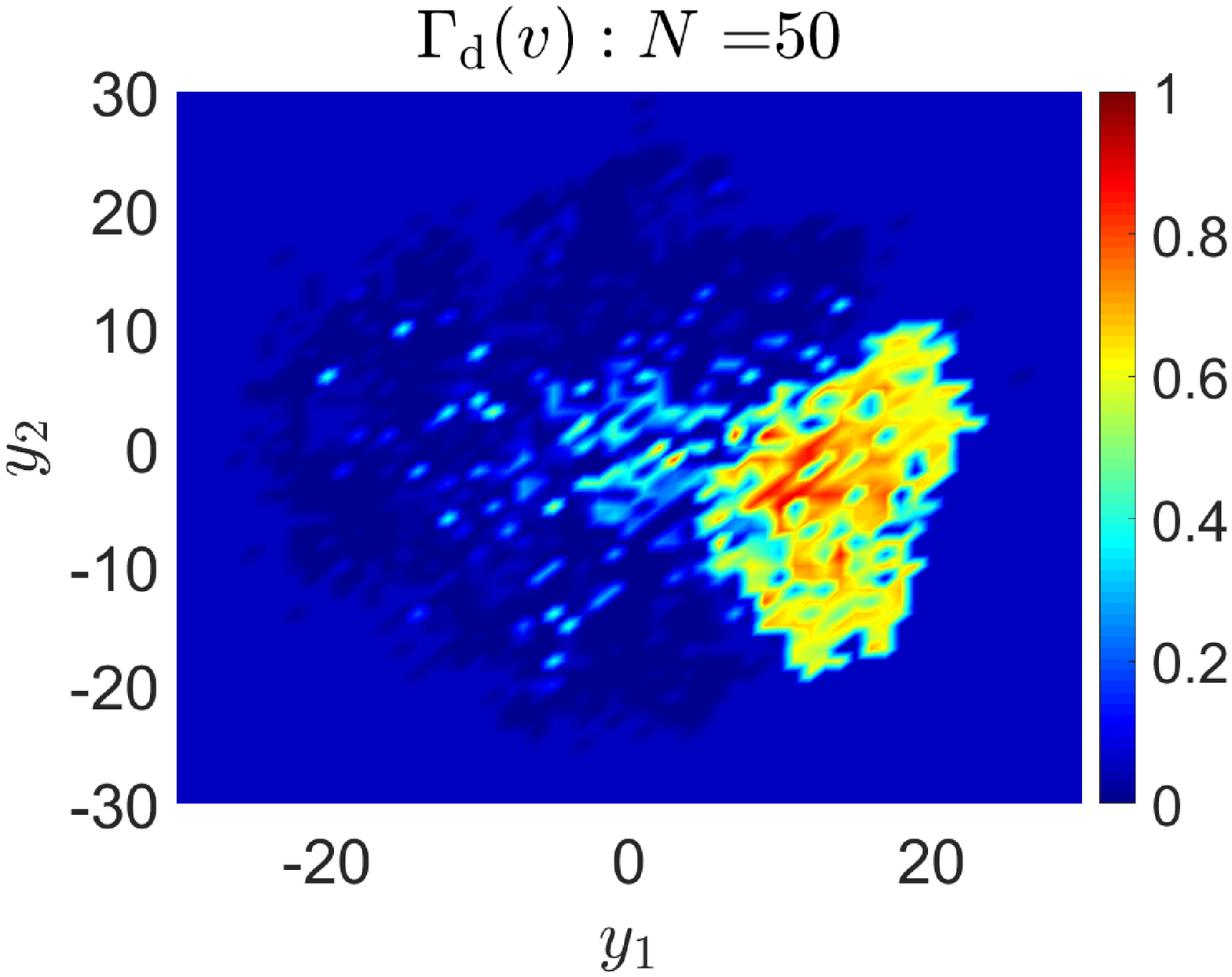}%
\label{fig.combine_50}}
\hfil
\subfloat[]{\includegraphics[width=1.7in]{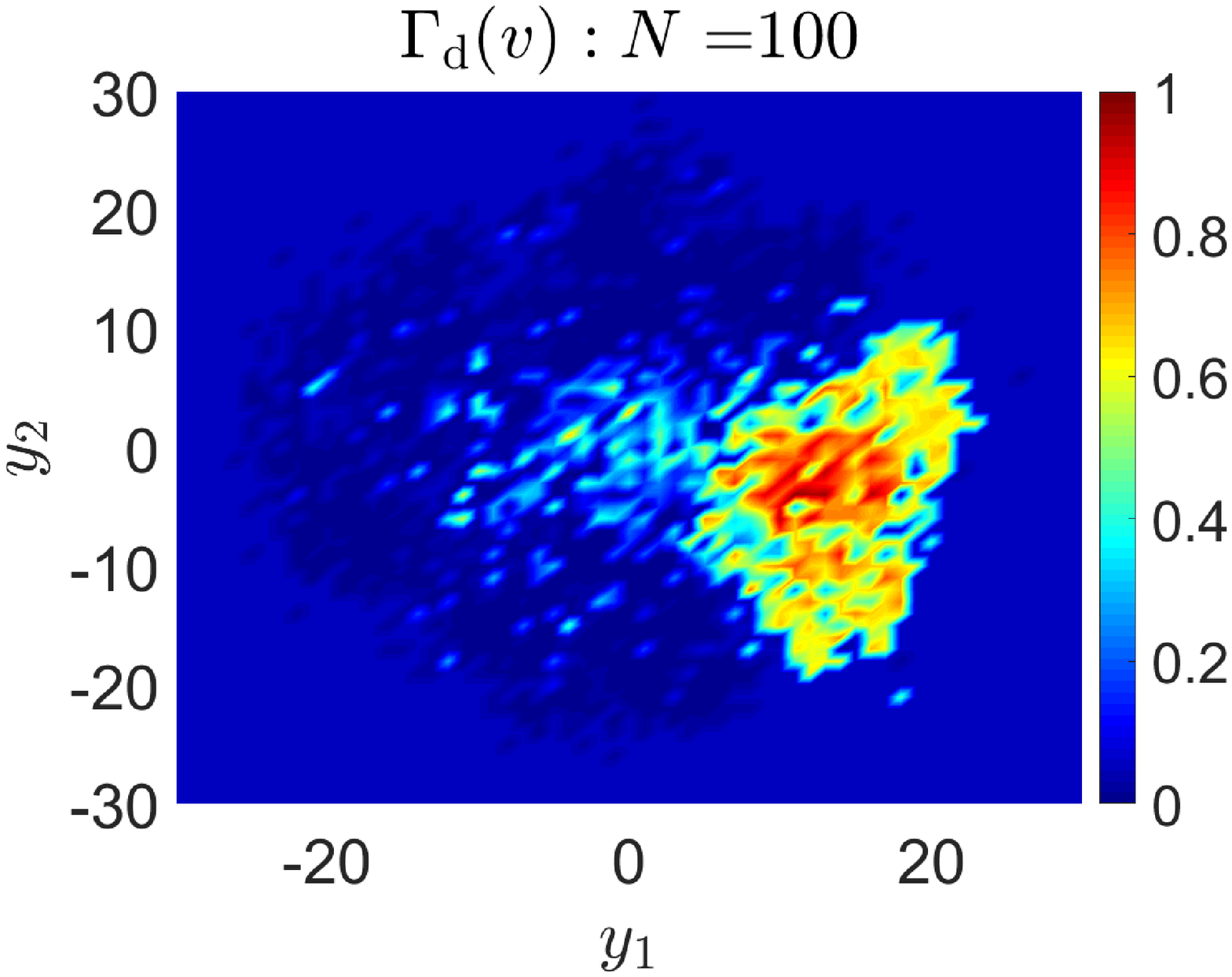}%
\label{fig.combine_100}}
\caption{Results of the online adaptation. (a)-(d) The prior DSAF $\Gamma_{\mathrm{d}}^{\mathrm{prior}}(v)$ in different update iterations $N$. $N = 0$ refers to the initialization prior to the online adaptation. The values of the safety estimates are represented by different colors.  (e)-(h) 
The feedback DSAF $\Gamma_{\mathrm{d}}^{\mathrm{feedback}}(v)$ in different update iterations $N$. (i)-(l) The DSAF $\Gamma_{\mathrm{d}}(v)$ in different update iterations $N$.}
\label{fig.online_update_results}
\end{figure*}

The initial realization of the low-dimensional safety feature, i.e., the values of simplified states $y^1,\ldots, y^{k_t}$, obtained from t-SNE is given in Fig.~\ref{fig.results_tSNE}. 
We use $\delta = 0.01$ in (\ref{eq.similarity_modified}) and set the perplexity and tolerance of t-SNE (see~\cite{maaten2008visualizing}) to 40 and $1e^{-4}$, respectively.
The result shows that the safe and unsafe original system states are clearly separated in the two-dimensional simplified state space $\mathcal{Y} \subseteq \mathbb{R}^2$. 

The state mapping $y = \Psi(x)$ is represented by a two-layer neural network with 128 neurons in each layer, which is trained using the initial realization of simplified states $y^1,\ldots, y^{k_t}$ and the set of original system states $\{x_{\mathrm{sim}}^1, \ldots, x_{\mathrm{sim}}^{k_t} \}$. 
By recomputing the outputs of the learned neural network, we obtain the final realization of the low-dimensional safety feature, i.e., the values of the simplified states $y^1,\ldots, y^{k_t}$, given in Fig.~\ref{fig.results_mapping_NN}.
Due to approximation error, certain simplified states have a slightly changed position compared to the values obtained from t-SNE.
However, this does not affect the computation of the low-dimensional representation of the safe region $\mathcal{S}_y$, as the results are updated later in the online adaptation using the feedback data.

We set the simplified state space as $\{\mathcal{Y} \hspace{1mm} | \hspace{1mm} -30 \leq y_1, y_2 \leq 30 \}$.
By discretizing the simplified state space $\mathcal{Y}$ into grid cells with step size 1 in both $y_1$ and $y_2$, we obtain the index vector $v \in \{1,2,\ldots,60 \}^2$. 
The prior DSAF $\Gamma_{\mathrm{d}}^{\mathrm{prior}}(v)$ is thus computed from the training dataset $\mathcal{D}_{\mathrm{train}}$ using the index vector $v$.
The results are given in Fig.~\ref{fig.sim_0}, where the initial subjective uncertainty, the initial estimate and the minimum number are selected as $\mu_{\mathrm{ini}} = 0.4$, $B_{\mathrm{ini}} = (0.05,0.55,0.4)$ and $k_{\mathrm{min}} =3$, respectively.
Depending on the number of safe and unsafe training data in each grid cell, the prior DSAF $\Gamma_{\mathrm{d}}^{\mathrm{prior}}(v)$ estimates the probability $p(x \in \mathcal{S})$ for original system states $x$ that take the index vector $v$ from the locating function $L(x)$.
In Fig.~\ref{fig.combine_0}, the DSAF $\Gamma_{\mathrm{d}}(v)$ is initialized by the prior DSAF $\Gamma_{\mathrm{d}}^{\mathrm{prior}}(v)$.
In the next subsection, we demonstrate the update process of the DSAF $\Gamma_{\mathrm{d}}(v)$ using the proposed online adaptation method.

\subsection{Updating the Low-dimensional Representation}
\label{sec.results_update}

To simulate a mismatch between the nominal and the real systems, we set the mass and the maximal lifting force of the real system to $m = 0.8 \text{ kg}$ and $f = 145 \text{ N}$, respectively.
To eliminate the influence of a specific learning task or algorithm and focus on illustrating the update process, the feedback dataset $\mathcal{D}_{\mathrm{feedback}}$ is obtained by randomly selecting states $x_{\mathrm{real}}$ where the corrective controller $K(x)$ is activated, such that the entire original system state space can be visited.

The following parameters are used in the online adaptation method: 
$\mu_{\mathrm{min}} = 0.1$, $p_{\mathrm{th}} = 0.3$, $\alpha = 3e^5$, $\beta = 0.3$, $\gamma = 0.4$.
The GPR model $\mathrm{GP}(x)$ uses a squared exponential kernel.
To demonstrate the online update process, we collect the feedback data one by one and incrementally extend the feedback dataset $\mathcal{D}_{\mathrm{feedback}}$.
The DSAF $\Gamma_{\mathrm{d}}(v)$ is updated once when every $k_u = 20$ feedback data are obtained.

\begin{figure*}[!t]
\centering
\subfloat[]{\includegraphics[width=1.7in]{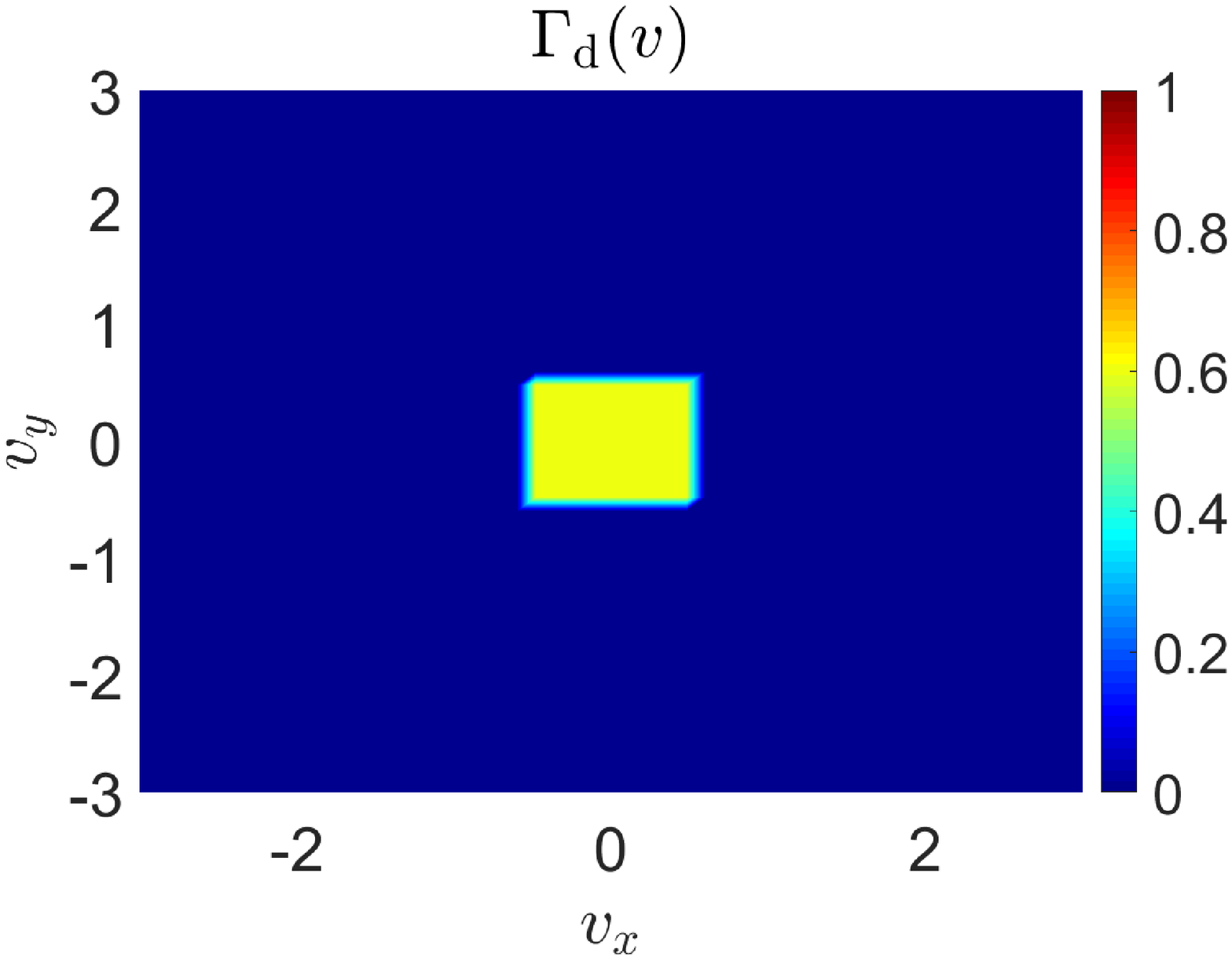}%
\label{fig.tro_ini}}
\hfil
\subfloat[]{\includegraphics[width=1.7in]{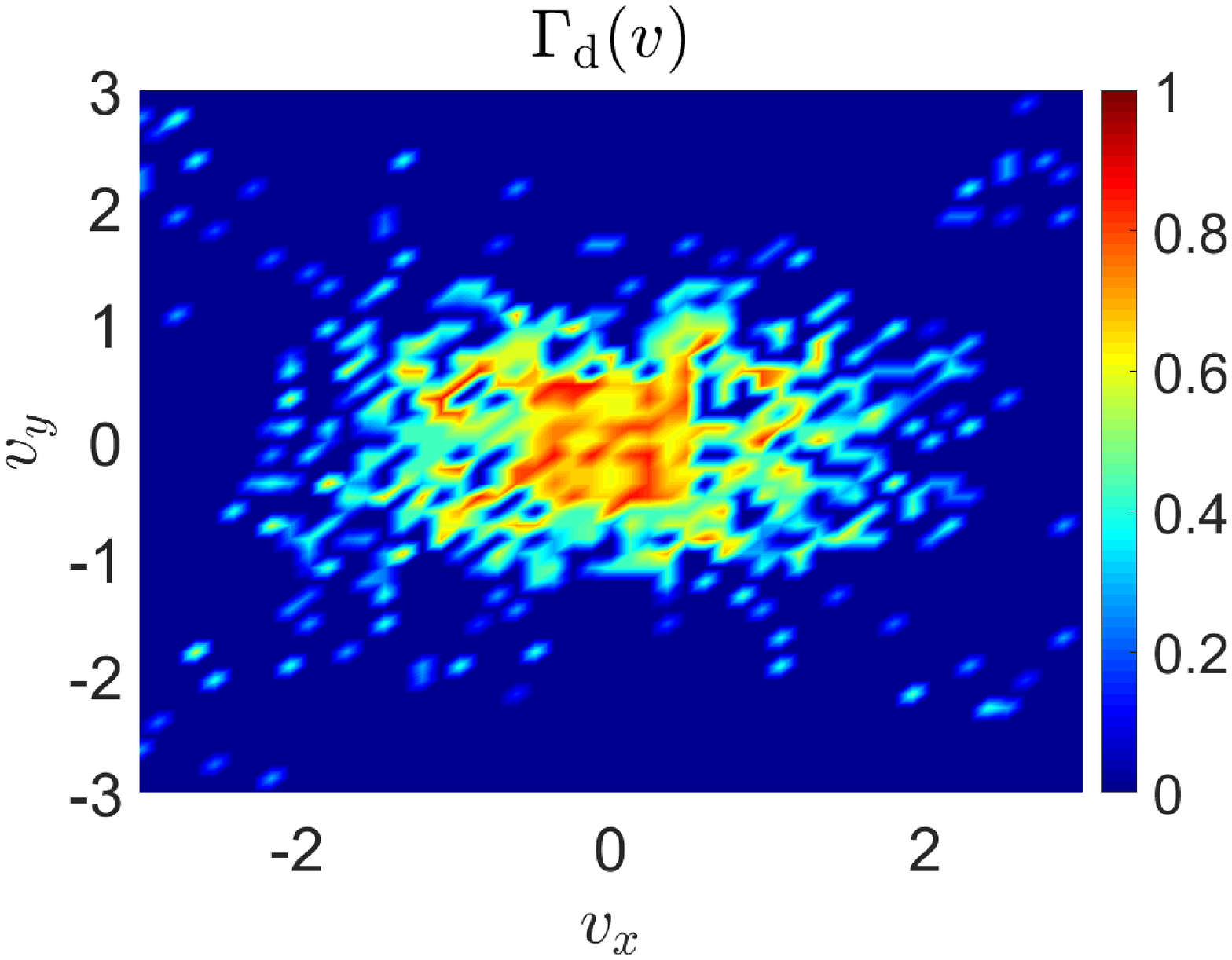}%
\label{fig.tro_small}}
\hfil
\subfloat[]{\includegraphics[width=1.7in]{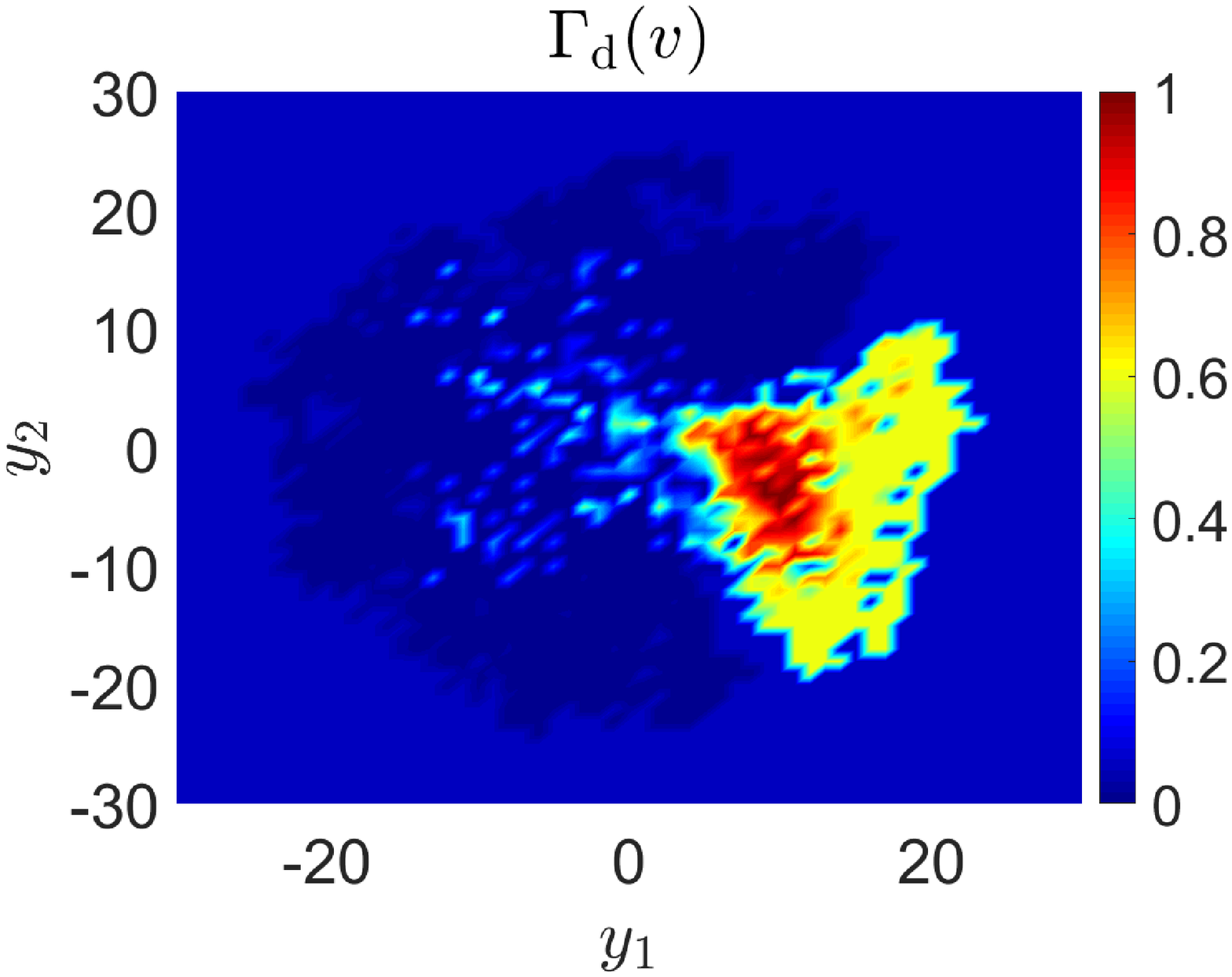}%
\label{fig.prop_small}}
\hfil
\subfloat[]{\includegraphics[width=1.7in]{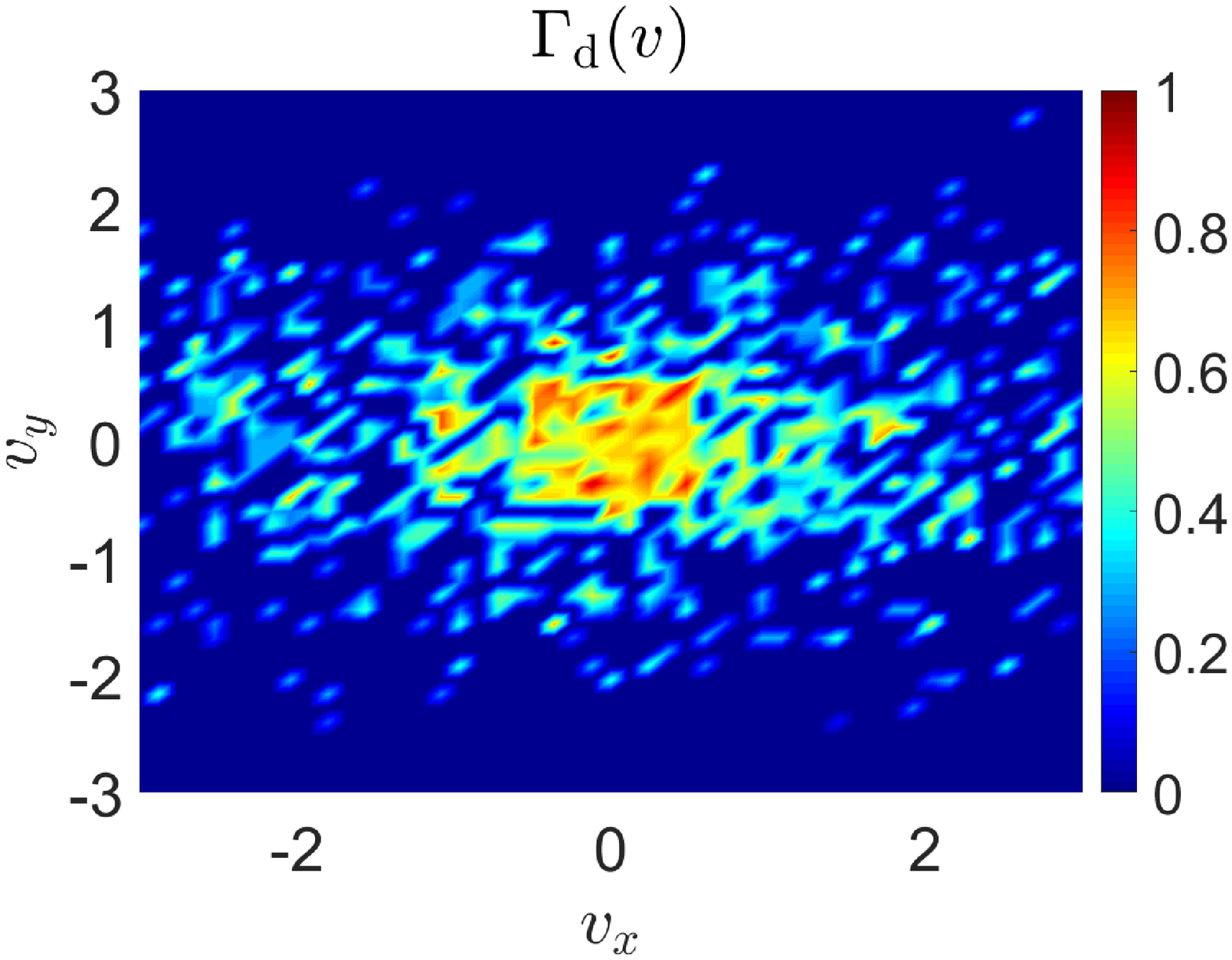}%
\label{fig.tro_full}}
\caption{Comparison with physically inspired model order reduction. (a) For physically inspired model order reduction, the DSAF $\Gamma_{\mathrm{d}}(v)$ is initialized conservatively. (b)-(c) The DSAFs $\Gamma_{\mathrm{d}}(v)$ obtained by using physically inspired model order reduction and the proposed approach, respectively. The feedback dataset $\mathcal{D}_{\mathrm{feedback}}^{'}$ is used for the update. (d) The DSAF $\Gamma_{\mathrm{d}}(v)$ obtained by using physically inspired model order reduction and the feedback dataset $\mathcal{D}_{\mathrm{feedback}}$. }
\label{fig.compare_results}
\end{figure*}

The results of the online adaptation are given in Fig.~\ref{fig.online_update_results}.
Prior to the update (update iteration $N = 0$), the DSAF $\Gamma_{\mathrm{d}}(v)$ is initialized as the prior DSAF $\Gamma_{\mathrm{d}}^{\mathrm{prior}}(v)$, while the feedback DSAF $\Gamma_{\mathrm{d}}^{\mathrm{feedback}}(v)$ is constructed using the empty BBA $B_{\varnothing}$ (see Fig.~\ref{fig.sim_0},~\ref{fig.real_0},~\ref{fig.combine_0}).
Once the learning procedure has started, we collect the feedback data incrementally.
In the early updating phase, e.g., update iteration $N = 10$, the DSAF $\Gamma_{\mathrm{d}}(v)$ is mainly determined by the prior DSAF $\Gamma_{\mathrm{d}}^{\mathrm{prior}}(v)$. 
The subjective uncertainties of each training data are modified using the feedback data, where we become confident about the safety of certain training data when we observe a nearby feedback data that has the same safety property. 
Since the amount of feedback data is insufficient for providing a reliable safety estimate, the feedback DSAF $\Gamma_{\mathrm{d}}^{\mathrm{feedback}}(v)$ has a smaller effect on the computation of the low-dimensional representation of the safe region $\mathcal{S}_y$ (see Fig.~\ref{fig.sim_10},~\ref{fig.real_10},~\ref{fig.combine_10}).

When more feedback data are available, e.g., update iteration $N = 50$, the feedback DSAF $\Gamma_{\mathrm{d}}^{\mathrm{feedback}}(v)$ is able to provide more accurate safety estimates, hence its influence on the DSAF $\Gamma_{\mathrm{d}}(v)$ also becomes more significant.
Due to the high dimensionality of the original system state $x$ and the limited amount of feedback data, it is difficult to acquire an estimate with high confidence from the GPR model $\mathrm{GP}(x)$. 
As a result, changes are marginal in the prior DSAF $\Gamma_{\mathrm{d}}^{\mathrm{prior}}(v)$ (see Fig.~\ref{fig.sim_50},~\ref{fig.real_50},~\ref{fig.combine_50}).
With even more feedback data, e.g., update iteration $N = 100$, the DSAF $\Gamma_{\mathrm{d}}(v)$ is able to provide reliable estimates about the probability $p(x \in \mathcal{S})$ for each index vector $v$.
While the prior and feedback DSAFs are updated accordingly, the DSAF $\Gamma_{\mathrm{d}}(v)$ represents the actual low-dimensional representation of the safe region $\mathcal{S}_y$ under the unknown part of the system dynamics $d(x)$ (see Fig.~\ref{fig.sim_100},~\ref{fig.real_100},~\ref{fig.combine_100}).

\subsection{Comparison with Physically Inspired Model Order Reduction}
\label{sec.compare_results}
We compare the proposed approach with the physically inspired model order reduction presented in~\cite{zhou2020general} in terms of the representation power of the identified low-dimensional representation of the safe region $\mathcal{S}_y$, i.e., how well the safe and unsafe states are separated.
To do this, we compute another DSAF $\Gamma_{\mathrm{d}}(v)$ using physical features.  
As in~\cite{zhou2020general}, the low-dimensional safety feature, i.e., the simplified state $y$, is selected for the velocities in $x$ and $y$ directions $y = [v_x, v_y]^T$.
To avoid any dangerous behavior in early learning phase, the low-dimensional representation of the safe region $\mathcal{S}_y$ is initialized conservatively~\cite{zhou2020general} by setting $\Gamma_{\mathrm{d}}(v) = 0.6$ for grid cells that satisfy $-0.5 \leq v_x, v_y \leq 0.5$ (see Fig.~\ref{fig.tro_ini}).

As the learning task in~\cite{zhou2020general} is relatively simple, the exploration in the original system state space is limited to a small subspace around the origin (see Section~\ref{sec.discussion_tasks} for more discussions on this point).
Therefore, to make a fair comparison, we also generate another feedback dataset $\mathcal{D}_{\mathrm{feedback}}^{'}$ that has the same size as the dataset $\mathcal{D}_{\mathrm{feedback}}$. 
However, instead of the complete original system state space given in Section~\ref{sec.results_setup}, the states $x_{\mathrm{real}}$ in the set $\mathcal{D}_{\mathrm{feedback}}^{'}$ are sampled from a smaller state space, where the ranges of angular positions and angular velocities are changed to $ -\frac{1}{3}\pi \leq \theta_r, \theta_p, \theta_y \leq \frac{1}{3}\pi \text{ rad}$ and $-3 \text{ rad/s} \leq \omega_r, \omega_p, \omega_y \leq 3 \text{ rad/s}$, respectively.

We first compare the performance of both approaches by considering a small state space, i.e., the feedback dataset $\mathcal{D}_{\mathrm{feedback}}^{'}$ is used for the update. 
The results show that, in this case, physical features are able to provide reasonable predictions about safety, i.e., the safe and unsafe regions are separated (see Fig.~\ref{fig.tro_small}).
Meanwhile, the proposed approach also produces a satisfying result with a marginally better separation between safe and unsafe states (see Fig.~\ref{fig.prop_small}). 

However, if the learning task becomes more complex, the complete state space usually has to be explored to enable an optimal policy to be found.
To simulate this scenario, we also update the initial DSAF $\Gamma_{\mathrm{d}}(v)$ using the feedback dataset $\mathcal{D}_{\mathrm{feedback}}$. 
As seen in Fig.~\ref{fig.tro_full}, when considering the entire original system state space, it is difficult to make reliable safety estimates based only on physical features.
The boundary between safe and unsafe regions becomes unclear, and there are numerous grid cells that lead to a safety estimate close to $0.5$.  
In contrast, the proposed approach is still able to find a representative low-dimensional representation of the safe region $\mathcal{S}_y$ for the complete state space.  
As the identified simplified state $y$ can describe the safety of original system states $x$ more precisely, a satisfying separation between safe and unsafe regions is achieved (see Fig.~\ref{fig.combine_100}) and more useful safety estimates are obtained.
The independence of the size of the state space indicates the possibility of implementing the proposed approach on different learning tasks, which in turn increases the applicability of the SRL framework.

\section{Discussion}
\label{sec.discussion}
In this work, we propose a general approach for efficiently identifying a low-dimensional representation of the safe region.
Two important aspects of the proposed approach are discussed in this section.

\subsection{Relevance to Different SRL Tasks}
\label{sec.discussion_tasks}

In~\cite{zhou2020general}, the SRL framework utilizes the low-dimensional representation of the safe region $\mathcal{S}_y$ that is obtained using physically inspired model order reduction.
Such a low-dimensional representation is useful when the learning task is relatively simple, e.g., teaching a quadcopter to fly forwards as given in~\cite{zhou2020general}, such that a satisfying control policy can be found without requiring an extensive exploration in the original state space. 
Since, in this case, the system state is likely to stay in a sub-state space near the origin, physical features are able to provide reliable safety estimates. 
However, when the learning task becomes more difficult, e.g., the quadcopter needs to track a complex 3D trajectory, the learning algorithm in general has to explore a large portion of the state space to find an optimal policy. 
Under these circumstances, at least a rough safety assessment of the complete state space is needed.
Unfortunately, being restricted by the representation power, the physically inspired low-dimensional representation of the safe region $\mathcal{S}_y$ fails to provide useful safety estimates when considering the entire state space.
Hence, the performance of the SRL framework is affected.

Therefore, to overcome this problem, this paper proposes a data-driven approach for identifying a low-dimensional representation of the safe region $\mathcal{S}_y$ that is able to make more precise predictions about safety.
Meaningful safety estimates are even obtained for the entire original state space.
This not only gives the learning algorithm more flexibility in choosing its actions to find the optimal policy, but also indicates the applicability of the proposed approach to more complex learning tasks.

\subsection{Strengths and Limitations}

The presented approach has three particular strengths.
First, it finds a low-dimensional representation of the safe region $\mathcal{S}_y$ that allows safe and unsafe states to be clearly separated for large portions of a high-dimensional state space; see also Section~\ref{sec.compare_results}. 
Second, the effort required for identifying the low-dimensional representation of the safe region $\mathcal{S}_y$ is low.
While, for instance, physically inspired model order reduction usually needs a comprehensive analysis of the system dynamics, the proposed approach relies solely on training data that can be collected efficiently even for complex dynamical systems through parallel computing and a suitable simulation environment.
Third, it fully utilizes the information contained in the feedback data using two DSAFs.
Hence, the update can be performed with few feedback data while providing a satisfying result.

However, the performance of the identified low-dimensional representation of the safe region $\mathcal{S}_y$ is affected by the quality of the nominal system, i.e. the magnitude of the discrepancy between the nominal and the real systems.
While the state mapping $y = \Psi(x)$ is determined using only training data, the online adaptation method attempts to find an accurate DSAF $\Gamma_{\mathrm{d}}(v)$ based on the learned low-dimensional safety feature.
If the reality gap is too large, then it is possible that the learned safety feature is not sufficiently representative and we might therefore observe more grid cells with final safety estimates that are close to 0.5, i.e., $\Gamma_{\mathrm{d}}(v) \approx 0.5$, which are less useful for guiding the learning process.
In general, if the nominal system is assumed to be unreliable, a high probability threshold $p_t$ should be used for constructing the low-dimensional representation of the safe region $\mathcal{S}_y$ (see \eqref{eq.map_m}), such that the learning process becomes more conservative for keeping the system safe.
However, we usually consider the unknown system dynamics $d(x)$ as bounded within a reasonable range, since it makes less sense to use a dissimilar nominal system to predict the behavior of the real system.
To further generalize the proposed approach, more studies are required to quantify the influence of the simulation-to-reality gap on the reliability of the obtained safety estimates.

\section{Conclusion}
\label{sec.conclusion}

To apply SRL to complex dynamical systems, this paper proposes a novel data-driven approach to identify a low-dimensional representation of the safe region for realizing a general SRL framework. 
Using a nominal system model that predicts the behavior of the real system, we first collect training data about the safety of different system states.
Then, by computing the probabilistic similarities between each training data using a data-driven method, an initial low-dimensional representation of the safe region is obtained.
To compensate for the mismatch between the nominal and the real systems, an efficient online adaptation method based on belief function theory is also proposed to update the low-dimensional representation of the safe region by accounting for the real system behavior.
Experimental results show that, compared to the previous work, a more reliable and representative low-dimensional representation of the safe region is found using the proposed approach.
However, our approach has the limitation that its performance is affected by the magnitude of discrepancy between the nominal and real systems.
If the reality gap is assumed to be large, then it is likely that a less meaningful low-dimensional representation of the safe region will be obtained.

For future work, we intend to combine the data-driven method with model-based model order reduction techniques to find an approach that is more robust to the simulation-to-reality gap when identifying the low-dimensional representation of the safe region.
Moreover, we also plan to investigate the possibility of quantifying the similarity between different dynamical systems, such that the learned safety feature can be generalized from one system to other similar systems.
How the similarity between dynamical systems will be measured, is, however, still an open research problem.

\ifCLASSOPTIONcaptionsoff
  \newpage
\fi



\bibliographystyle{IEEEtran}
\bibliography{ref.bib}

\begin{IEEEbiography}[{\includegraphics[width=1in,height=1.25in,clip,keepaspectratio]{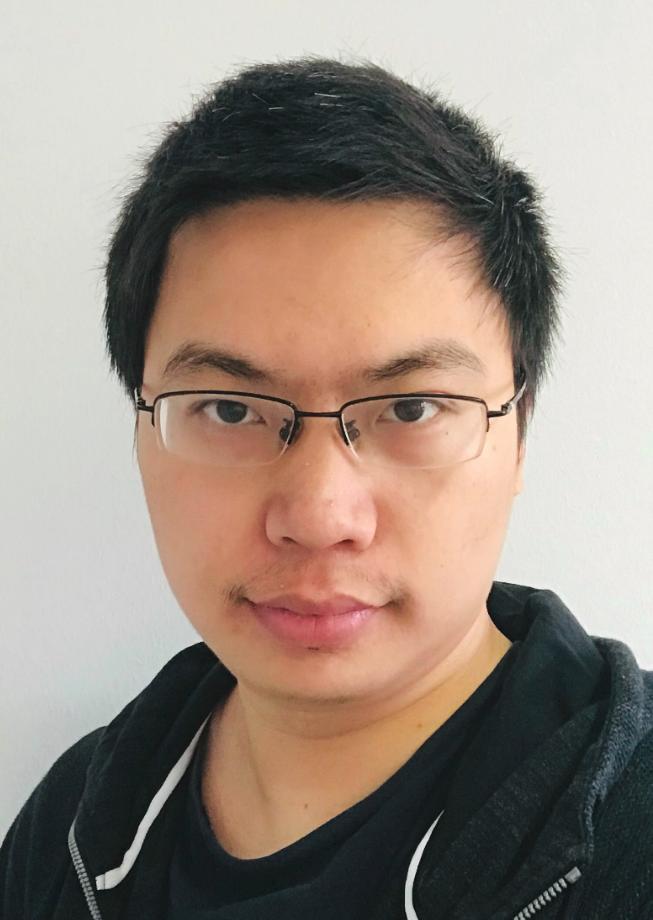}}]{Zhehua Zhou}
received the B.E. degree in mechatronics engineering from Tongji University, Shanghai, China in 2014 and the M.Sc. degree in electrical and computer engineering from the Department of Electrical and Computer Engineering, Technical University of Munich, Munich, Germany in 2017.
He is currently working toward the Ph.D. degree in learning-based control and robotics from the Chair of Automatic Control Engineering, Department of Electrical and Computer Engineering, Technical University of Munich, Munich, Germany. 
His research interests include optimal control, learning-based control and the applications to robotics.
\end{IEEEbiography}

\begin{IEEEbiography}[{\includegraphics[width=1in,height=1.25in,clip,keepaspectratio]{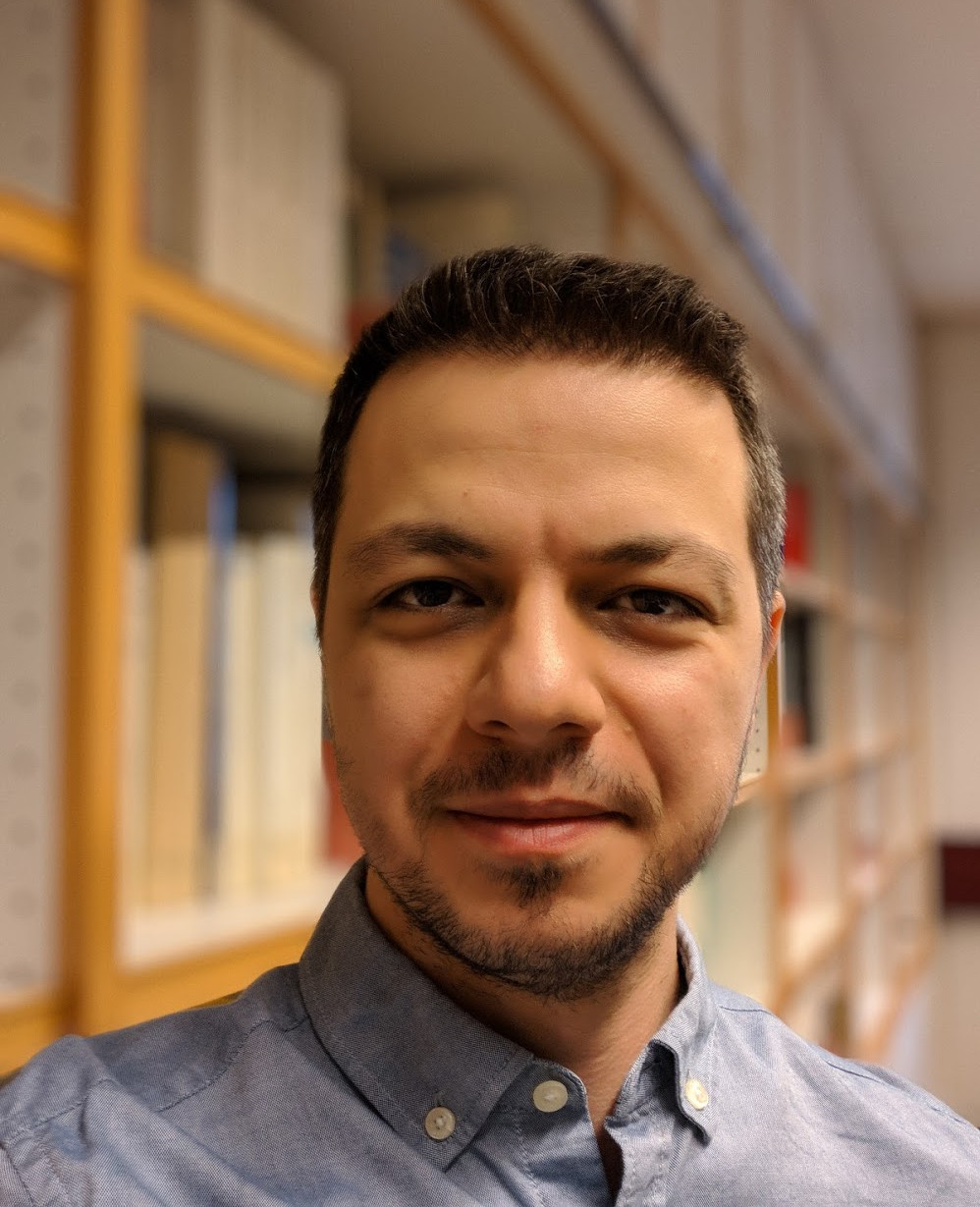}}]{Ozgur S. Oguz} received the B.Sc. and the M.Sc. (summa cum laude) degrees in computer science from Koc University, Istanbul, Turkey, and the Ph.D. degree (summa cum laude) from the Department of Electrical and Computer Engineering, Technical University of Munich, Germany, in 2018. Currently he is a postdoctoral researcher at the Machine Learning and Robotics Lab, University of Stuttgart and Max Planck Institute for Intelligent Systems, Germany. His research interests are developing autonomous systems that are able to reason about their states of knowledge, take sequential decisions to realize a goal, and simultaneously learn to improve their causal physical reasoning and manipulation skills.
\end{IEEEbiography}

\begin{IEEEbiography}[{\includegraphics[width=1in,height=1.25in,clip,keepaspectratio]{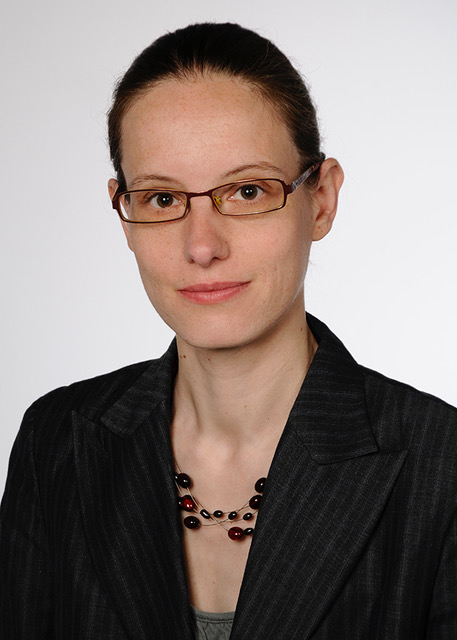}}]{Marion Leibold (nee Sobotka)}
received the diploma degree in applied mathematics from the Technische Universit{\"a}t M{\"u}nchen, Germany, in 2002. 
She received the PhD degree from the Faculty of Electrical Engineering and Information Technology, Technische Universit{\"a}t M{\"u}nchen, Germany, in 2007. 
Currently she is a senior researcher at the Institute of Automatic Control Engineering, Faculty of Electrical Engineering and Information Technology, Technische Universit{\"a}t M{\"u}nchen, Germany. 
Her research interests include optimal control and nonlinear control theory, and the applications to robotics. 
\end{IEEEbiography}

\begin{IEEEbiography}[{\includegraphics[width=1in,height=1.25in,clip,keepaspectratio]{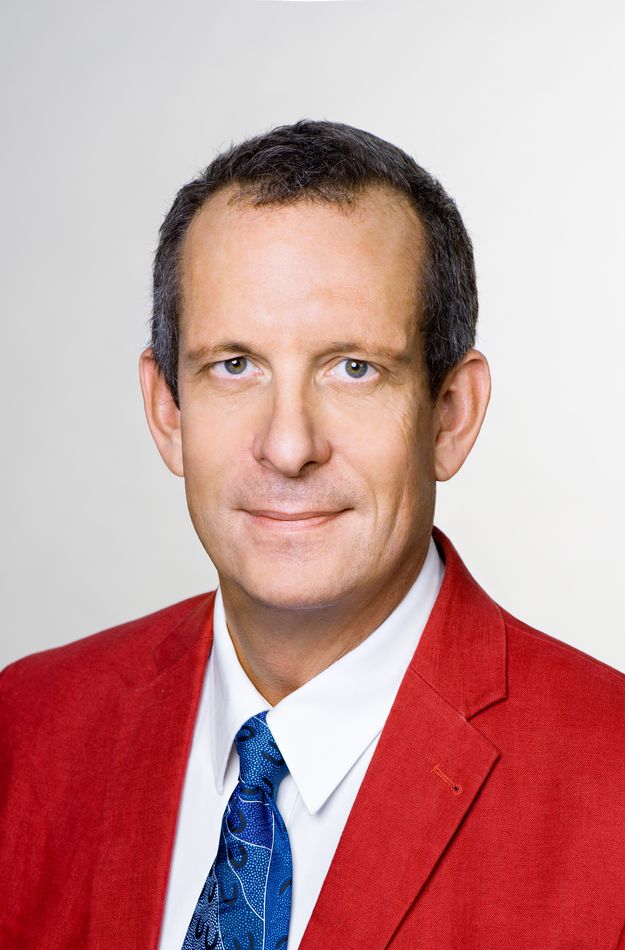}}]{Martin Buss}
received the Diploma Engineering degree Technische Universit{\"a}t Darmstadt, Darmstadt, Germany, in 1990 and the Doctor of Engineering degree from The University of Tokyo, Tokyo, Japan, in 1994, both in electrical engineering.
In 1988, he was a Research Student for one year with Science University of Tokyo.
From 1994 to 1995, he was a Postdoctoral Researcher in the Department of Systems Engineering, Australian National University, Canberra, ACT, Australia. 
From 1995 to 2000, he was a Senior Research Assistant and Lecturer in the Chair of Automatic Control Engineering, Department of Electrical Engineering and Information Technology, Technical University of Munich, Munich, Germany.
From 2000 to 2003, he was a Full Professor, the Head of the Control Systems Group, and the Deputy Director of the Institute of Energy and Automation Technology, Faculty IV, Electrical Engineering and Computer Science, Technical University Berlin, Berlin, Germany.
Since 2003, he has been a Full Professor (Chair) in the Chair of Automatic Control Engineering, Faculty of Electrical Engineering and Information Technology, Technical University of Munich, where he has been in the Medical Faculty since 2008.
Since 2006, he has also been the Coordinator of the Deutsche Forschungsgemeinschaftcluster of excellence ``Cognition for Technical Systems (CoTeSys)", Bonn, Germany. 
His research interests include automatic control, mechatronics, multimodal human system interfaces, optimization, nonlinear, and hybrid discrete-continuous systems.
\end{IEEEbiography}

\end{document}